\documentclass[11pt]{article}
\usepackage{graphicx}

\usepackage{indentfirst,csquotes}

\usepackage{amsmath,amsfonts,bm}

\def\eqref#1{equation~\ref{#1}}

\def\1{\bm{1}}

\DeclareMathAlphabet{\mathsfit}{\encodingdefault}{\sfdefault}{m}{sl}
\SetMathAlphabet{\mathsfit}{bold}{\encodingdefault}{\sfdefault}{bx}{n}

\newcommand{\E}{\mathbb{E}}

\newcommand{\KL}{D_{\mathrm{KL}}}
\newcommand{\Var}{\mathrm{Var}}

\usepackage[letterpaper,margin=1in,headheight=15pt]{geometry}

\usepackage{fancyhdr}
\usepackage[bottom]{footmisc}

\usepackage{url}            
\usepackage{booktabs}       
\usepackage{amsfonts, amsmath}       
\usepackage{nicefrac}       
\usepackage{microtype}      
\usepackage{xcolor}         

\usepackage{nicefrac}

\usepackage{amsmath,amssymb,amsthm,mathtools,mathrsfs}
\usepackage{algorithm,algpseudocode}
\usepackage{booktabs,tabularx,threeparttable}
\usepackage{hyperref}       
\hypersetup{
	colorlinks   = true, 
	urlcolor     = blue, 
	linkcolor    = blue, 
	citecolor   = blue 
}
\usepackage{multirow}
\usepackage{csquotes}
\usepackage{enumitem}
\setlist{nosep,leftmargin=*}
\newtheorem{theorem}{Theorem}[section]
\newtheorem{lemma}[theorem]{Lemma}
\newtheorem{corollary}[theorem]{Corollary}

\newtheorem{definition}[theorem]{Definition}

\theoremstyle{remark}

\usepackage{chngcntr}
\usepackage{apptools}

\newtheorem*{lemma*}{Lemma}

\newtheorem*{Proposition*}{Proposition}
\usepackage{graphicx}
\usepackage{graphics}
\usepackage{parskip}
\usepackage{amsmath}

\usepackage{subfig}

\usepackage{dsfont}
\usepackage{natbib}
\usepackage[leftcaption]{sidecap}

\usepackage{booktabs,tabularx,threeparttable}

\newcommand{\Astar}{\mathcal A^\star}
\newcommand{\F}{\mathcal F}
\newcommand{\Prob}{\mathbb P}
\newcommand{\ind}{\mathbf 1}
\newcommand{\clip}{\operatorname{clip}}
\newcommand{\kl}{\operatorname{kl}}
\newcommand{\Alt}{\operatorname{Alt}}

\usepackage{wrapfig}
\usepackage[normalem]{ulem}
\usepackage{soul}
\usepackage[normalem]{ulem}               
\newcommand\redout{\bgroup\markoverwith
{\textcolor{red}{\rule[0.5ex]{2pt}{0.8pt}}}\ULon}

\title{Balanced Sampling in Stochastic Multi-objective Bandits}

\author{%
  Mengfan Xu\thanks{Generative AI tools, particularly GPT-6, assisted in the development
of this manuscript. Details of their use are disclosed in Section \ref{final}.
}\\
  Mechanical and Industrial Engineering \\
  University of Massachusetts Amherst \\
  \texttt{mengfanxu@umass.edu} \\
}

\date{October 26, 2026}

\begin{document}
\maketitle

\begin{abstract}
\noindent In stochastic multi-objective bandits, Pareto optimal arms may represent
different trade-offs among objectives. Balanced sampling across them is
important in practical settings such as online advertising. Yet, there seems to be 
little consensus on how to measure fairness, to our best knowledge. Achieving fairness is also tough as the set of Pareto optimal arms is unknown, and early sampling decisions
may create imbalances that are difficult to erase afterwards. We measure unfairness by the KL divergence from the uniform
distribution over the true Pareto set to the empirical measure induced
by the sampling trajectory. Nicely, this measure admits an exact
decomposition that separates the cost of sampling suboptimal arms from
the imbalance among Pareto optimal arms. With a recursive form for unfairness, we develop
\emph{adaptive top-two sampling}, which combines sampling counts
and empirical evidence of Pareto membership to define a gain
for each arm and guide sampling without committing to an
estimated Pareto set. When the Pareto set is stable under small perturbations, we establish
an $O(\log T/T)$ and $O(T^{-1})$ bound on expected unfairness and variance, respectively. 
Information-theoretic lower bounds show that the $\log T/T$ rate
can be unavoidable for uniformly fair policies on stable instances,
matching our upper bound. More surprisingly, stability is indeed crucial, as the expected unfairness can be $\omega(\log T/T)$ without it. Finally, we create 
a variant that encourages exploration of undersampled arms to reduce
the risk of persistent underallocation. At sufficiently large horizons,
it preserves the rate on the expected unfairness while admitting a sharper bound of $O(\log^2 T/T^2)$ on variance.
Numerical studies support these findings and suggest
practical insights.\end{abstract}

\section{Introduction}
\label{sec:introduction}

Multi-armed bandits have supported sequential decision making widely, where a decision maker repeatedly selects an option (arm) and receives information about its performance in the form of a reward \citep{auer2002finite,auer2002nonstochastic,bubeck2012regret}. Classical bandits typically consider a scalar reward and measure performance by regret, which captures the cumulative cost of selecting suboptimal arms. However, many decisions involve multiple objectives beyond a single scalar reward. Multi-objective bandits (MO-MAB) have emerged to address such problems \citep{drugan2013designing,busa2017multi,mehrotra2020bandit,lu2019multi}. Here, each arm exhibits a vector of rewards, and Pareto dominance provides a new notion of optimality. An arm is Pareto optimal if no other arm performs at least as well in every objective and strictly better in at least one. 

Importantly, multiple arms may be Pareto optimal, and the set of such arms is called the Pareto set by convention.  In fact, different Pareto optimal arms may
represent different trade-offs among objectives. For example,
in online recommendation, different choices may favor user
engagement or supplier diversity, while in network routing,
different routes may offer trade-offs between latency and
reliability \citep{mehrotra2020bandit,lu2019multi}.
Repeatedly selecting only a subset of Pareto optimal arms
may favor particular trade-offs, thereby incurring opportunity costs for others. A balanced selection across Pareto optimal arms helps to balance these different forms of trade-offs in real-world scenarios, and thus demands our attention. In fact, in classical bandits, fairness  \citep{wang2021fairness, singh2018fairness, simchi2023multi} has been well motivated and understood. Yet we can imagine that these insights may not translate into MO-MAB given Pareto dominance.

A substantial line of research on MO-MAB has studied how to learn and select Pareto optimal arms. Amongst, two most common goals are 1) to minimize Pareto regret, which measures the cumulative cost of selecting Pareto suboptimal arms \citep{drugan2013designing,turgay2018contextual,lu2019multi,guan2026stochastic}, and 2) to identify the Pareto set through careful sampling with as small sample complexity as possible  \citep{auer2016pareto,ararat2023vector,garivier2024sequential,kone2024bandit}. Yet, \emph{neither  guarantees a balanced (or fair) allocation among these Pareto optimal arms.} To see this, consider two arms with mean reward vectors $(1,0)$ and $(0,1)$. Both arms are Pareto optimal, and selecting either incurs zero Pareto regret. In fact, a learner may select the first arm almost exclusively and still achieve zero regret, although the second arm receives almost no observations. Pareto set identification, on the other hand, concerns which arms belong to the set and stops once we figure out both arms are Pareto optimal. By doing so, we emphasize their total sampling frequencies that \emph{sum} over arms and time, but not their \emph{relative} sampling frequencies. The latter indeed matters a lot when we care about the sampling trajectory \emph{across} arms and time steps. 

In parallel, fairness in MO-MAB has received relatively less attention, to the extent of our knowledge.
Existing work has approached it, but there has been a lack of consensus on which definition we
should adopt in general. The first debate concerns fairness
across objectives \citep{ann2026blessings} or among arms
\citep{yahyaa2014annealing,mehrotra2020bandit}.
The latter is more appropriate when we want to balance
the trade-offs among objectives entangled at the arm level
via future sampling, 
whereas the former is more about the disentangled trade-offs
among objectives when looking backward. Herein, we focus on the latter,
in line with classical bandits and our goal in an online setting.
Within this layer, multiple measures still exist, mostly based on
information entropy. On the positive side, this means that
researchers have already considered this problem; on the flip side,
it may imply that we have not yet converged on a coherent definition,
as we can still find cases where the existing measures are less
appropriate. For example, \citet{yahyaa2014annealing} use Shannon entropy \citep{sethna2006statistical} 
to measure unfairness, which increases as the allocation among the Pareto set becomes more uniform. Yet, uniform allocation can actually be
the fairest. A consequence of the lack of consensus is that there have been almost
no theoretical analyses of fairness in MO-MAB, to the best of our
knowledge.

Operationally, achieving balanced allocation is challenging because the Pareto set is unknown, while fairness requires controlling the sampling process over time. If the Pareto set were known, we could simply favor its least-sampled arms. In practice, however, we must infer Pareto membership from noisy observations while deciding how to allocate samples. A natural approach is to estimate the Pareto set and balance sampling within it. Yet, an incorrect estimate may have substantial consequences. A Pareto optimal arm mistakenly classified as dominated may receive few further observations, delaying the correction of its estimate and causing persistent underallocation. Such selections would increase the cumulative cost as well. Even if the Pareto set is eventually identified correctly, the learning process may have already consumed a substantial portion of the horizon. We may not have enough time to compensate for the underallocation of newly discovered Pareto optimal arms. This difficulty is particularly pronounced near a boundary of Pareto dominance, where small estimation errors can flip Pareto membership entirely. Thus, learning Pareto membership and balancing allocation necessitate a joint treatment, as each selection affects both. We hope to find out:
\begin{center}
\emph{Can we find a satisfying measure of fairness and achieve
fair allocation over an unknown Pareto set?}
\end{center}

\paragraph{Main contributions.} We answer this question firmly. We first measure terminal unfairness by the KL divergence
$\mathcal F_T=D_{\mathrm{KL}}(\pi^\star\|\widehat\pi_T)$,
where $\pi^\star$ is the uniform distribution over the true Pareto set
and $\widehat\pi_T$ is the empirical allocation. Intuitively, the closer the empirical allocation is to a uniform distribution across Pareto-optimal arms, the greater the fairness.
Interestingly, this measure admits an exact decomposition:
\begin{equation}
\label{eq:intro-decomp}
\mathcal F_T
=
-\log(1-\nicefrac{B_T}{T})
+
D_{\mathrm{KL}}(U_m\|q_T),
\end{equation}
where $B_T$ is the number of dominated-arm pulls, $m$ is the Pareto
set size, and $q_T$ is the empirical allocation conditional on
Pareto-optimal pulls. The two terms capture the cost of sampling
dominated arms and the imbalance among Pareto-optimal arms,
respectively. This provides a natural connection between fairness
and Pareto regret while revealing the additional requirement of
balanced allocation. After careful comparison with existing measures,
we find that this measure is coherent and appropriate while preserving
the information-theoretic structure.

The one-step change in unfairness shows that the benefit of sampling
an arm depends on both its sampling count and Pareto membership.
To this end, we develop \emph{adaptive top-two sampling}, which
defines a step-wise fairness gain based on the sampling count and
an exponential weight reflecting empirical evidence of Pareto membership.
It selects between the two arms with the largest gains,
promoting exploration without committing to an estimated Pareto set. When the Pareto set is stable under small changes in mean rewards, we show that unfairness has mean $O_\mu(\log T/T)$ and
variance $O_\mu(T^{-1})$. Moreover, as a by-product, the expected Pareto regret grows
at most logarithmically. With information-theoretic lower bounds, we show that
the $\log T/T$ rate can be unavoidable for uniformly fair policies on instances where the Pareto set is stable.
More surprisingly, at a boundary where the Pareto set is unstable, we
prove a cost of $\omega(\log T/T)$ for any uniformly fair policy, which also justifies the assumption of stability. These results speak to the power of Pareto geometry.

Finally, a variance of order $O_\mu(T^{-1})$ could require
$\Omega(T)$ independent runs to estimate expected fairness
with sufficient precision. To improve reliability across
trajectories, we introduce a method
that encourages more exploration of undersampled arms and thus reduces mistakes about Pareto membership that can lead to catastrophic unfairness. When the horizon is sufficiently large, it
preserves the $O_\mu(\log T/T)$ rate for the expected unfairness while
admitting a sharper upper bound of
$O_\mu(\log^2 T/T^2)$ on variance. At smaller horizons, however, it may increase pulls of Pareto suboptimal arms, suggesting a subtle trade-off whose consequences
depend on the stability of the Pareto set.

\section{Problem Formulation}\label{sec:model}
We consider a stochastic multi-objective bandit problem, which itself has been by large studied in recent years, but from a new perspective of balanced allocation across good arms that first demands a new framework for measuring "being balanced." This section is thus devoted to such conceptualization. Let us consider $K\ge2$ arms and $d\ge1$ reward coordinates for each arm $i \in [K]$. Each arm is associated with an independent reward tape $X_i = (X_i^{(1)}, \ldots, X_i^{(d)})$ which is independent across arms and time steps and identically distributed across time; however, different coordinates of a reward vector of the same arm may be dependent. The reward vector of arm $i$ has a unknown mean vector $\mu_i=(\mu_i^{(1)},\ldots,\mu_i^{(d)})\in\mathcal M$, where $\mathcal M=\prod_{r=1}^d[a_r,b_r]$ is a Cartesian product of the coordinate ranges. Without loss of generality, we assume $a_r,b_r$ are known; typically $a_r = 0, b_r=1$ in the literature. Every reward coordinate $X_i^{(r)}$ follows a $\sigma^2$-sub-Gaussian distribution with known variance proxy $\sigma^2$, formally 
\begin{equation}
\E\exp\bigl(\lambda(X_i^{(r)}-\mu_i^{(r)})\bigr)\le \exp(\sigma^2\lambda^2/2),\qquad \lambda\in\mathbb R.
\end{equation}   Let us denote the time horizon of the bandit problem by $T$. At time $t \in [T]$, a player, equipped with a (possibly random) policy, chooses arm $A_t$ based on what it has observed thus far without any other information beyond the history. It then receives a reward vector $X_{A_t, t}$ which is one random realization of $X_{A_t}$ of the selected arm $A_t$ at time $t$.   

The goal of the player is to select the "good" arms in a "balanced" way, and let us formalize these terminologies one by one. We measure "good" through the celebrated Pareto dominance \citep{drugan2013designing, mehrotra2020bandit}. An arm $j$ \emph{dominates} arm $i$ when $\mu_j^{(r)}\ge\mu_i^{(r)}$ for every $r$, with at least one strict inequality. Let $\Astar=\Astar(\mu)$ be the nondominated arm set, and thus represent the set of "good" arms, which is oftentimes called Pareto optimal set; let us also denote its cardinality by $m=|\Astar|$.

A learning policy $\pi=(\pi_t)_{t\ge1}$ specifies
a probability distribution $\pi_t(\cdot\mid H_{t-1})$
on the measurable space of arms $([K],2^{[K]})$
at each time $t$, conditional on the observed history
$H_{t-1}=(A_s,X_{A_s,s})_{s=1}^{t-1}$.
Jointly, the policy and reward 
induce a probability measure $\mathbb P_{X}^\pi$
over the sequence of actions and rewards.

Let $N_i(t)=\sum_{s=1}^t\ind\{A_s=i\}$
denote the total number of pulls of arm $i$
up to and including time $t$. The empirical
frequency of arm $i$ is then
$\widehat\pi_{t,i}=N_i(t)/t$.
From a probabilistic perspective, these frequencies
define an empirical probability measure
$\widehat\pi_t$ on $([K],2^{[K]})$, with
$\widehat\pi_t(\{i\})=\widehat\pi_{t,i}$.
This is a random probability measure constructed
from the action sequence $(A_1,\ldots,A_t)$,
whose random variables are generally dependent
and non-identically distributed, with randomness
governed by $\mathbb P_{X}^\pi$.

Subsequently, we are clear that the target of the player is the ideal measure defined through $\pi_i^\star=m^{-1}\ind\{i\in\Astar\}$, which assigns equal probabilities to all Pareto optimal arms and 0 probabilities to suboptimal arms, though of course $\Astar$ is unknown to us as the mean vectors are unknown. This language in probability motivates and also prepares us to measure \emph{terminal unfairness} by the KL divergence between the ideal measure and the empirical measure 
\begin{equation}\label{eq:fairness}
\F_T=\KL(\pi^\star\|\widehat\pi_T)
=\frac1m\sum_{i\in\Astar}\log\frac{T}{mN_i(T)}.
\end{equation}
Naturally, we can also consider the unfairness of the expected empirical measure  where $\E$ is over the randomness of the allocation policy itself, \[\bar{\F}_T = \KL(\pi^\star\|\E\widehat\pi_T) = \frac1m\sum_{i\in\Astar}\log\frac{T}{m\E[N_i(T)]}.\] By the Jensen's inequality, we have that 
\(
E[\mathcal{F}_T] \geq \bar{\mathcal{F}}_T. 
\)

We would like to compare our measure with existing measures of fairness across Pareto optimal arms. These differ from measures of fairness across objectives, so for the sake of space and relevance, we omit that discussion here and refer readers to \citep{busa2017multi, mehrotra2020bandit, ann2026blessings} for a detailed introduction. It is surprising that our fairness also presents as an upper bound for the objective fairness in \citep{ann2026blessings}, and thus provides a natural bridge between objective fairness and fairness over Pareto optimal arms herein.  Note that an existing measure of unfairness measures the KL divergence between the optimal sampling policy and the actual sampling policy \citep{yahyaa2014annealing}. However, it does not restrict arms to be Pareto optimal and is ill-defined if we consider only Pareto optimal arms. The KL measure coincides with our measure when the target distribution is uniform over all Pareto optimal arms and all arms are Pareto optimal. Another existing measure is based on Shannon entropy \citep{yahyaa2014annealing}, namely, the entropy of the actual sampling frequencies restricted to Pareto optimal arms, which is essentially the empirical measure of the conditional distribution of the sampling policy. Higher entropy represents greater unfairness under their definition. This is fundamentally flawed, however, because it is well known that the uniform distribution has maximum entropy among all discrete distributions and, based on their definition, represents the most unfair allocation. At the same time, the uniform distribution is the fairest allocation when there is no preference over the priority of the Pareto optimal arms. This contradiction suggests that it may be less appropriate to use this measure. Instead, we directly compare the empirical measure with the fairest allocation, that is, the uniform distribution, and a smaller distance indicates a fairer allocation. In the meantime, our measure still admits an entropy formulation through the relationship between KL divergence and Shannon entropy, which we believe is satisfying.

More broadly, while we use the KL divergence distance, many other distance measures from probability theory can be adopted. Examples include the Kolmogorov, Wasserstein, and total variation distances. Our result provides a conservative guarantee, as the Kolmogorov distance is bounded by total variation, and the Wasserstein distance is bounded by a constant multiple of total variation on a finite space with a bounded ground metric \citep{ross2011fundamentals}. Moreover, total variation is bounded by the square root of half the KL divergence through Pinsker's inequality. Thus, by providing an upper bound on the KL divergence that defines balancedness, we also obtain upper bounds on all these distance measures in some sense. 

Beyond its role as a proper measure of balancedness, surprisingly, it can also connect well to existing metrics of efficiency of the selected arms in multi-objective bandits. Taking a closer look at this metric, we find that it admits a useful exact decomposition. To see this, let $B_T=\sum_{i\notin\Astar}N_i(T)$ denote the total number of pulls of dominated arms, which we refer to the less efficient pulls, and let $H_T=T-B_T$ be the total number of pulls of Pareto optimal arms. For each $i\in\Astar$, let us define $q_{T,i}=N_i(T)/H_T$ as its sampling proportion conditional on pulling a Pareto optimal arm. Finally, let $U_m$ denote the uniform distribution on $m$ points.

Throughout, we assume that $N_i(T) \geq 2$ for all arm $i \in [K]$ and $T \geq 2K$. This is concerned about the initialization, where the policy pulls each arm at least twice; we note that it is quite common to pull arms at least once, and this twice condition, though requiring a bit more, is still mild. 
\begin{lemma}\label{lem:decomp}
For every trajectory with positive counts on $\Astar$, which automatically holds by the condition of $N_i(T) \geq 2$ for every $i$, we have 
\begin{align}\label{eq:decomp}
\F_T=-\log(1-B_T/T)+\KL(U_m\|q_T), \KL(U_m\|q_T)\le \frac1{H_T}\sum_{i\in\Astar}\frac{(N_i(T)-H_T/m)^2}{N_i(T)}
\end{align}
In particular, $\F_T\ge B_T/T$ and $0\le\F_T\le\log(T/m)$. 
\end{lemma}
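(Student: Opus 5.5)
The plan is to expand the definition \eqref{eq:fairness} directly and split each logarithm. For $i\in\Astar$ we write $N_i(T)=H_T q_{T,i}$, which gives
\[
\log\frac{T}{mN_i(T)}=\log\frac{T}{H_T}+\log\frac{1/m}{q_{T,i}}.
\]
Averaging over $i\in\Astar$ with weights $1/m$ produces two pieces. The first is $\log(T/H_T)=-\log(1-B_T/T)$, using $H_T=T-B_T$. The second is $\sum_{i\in\Astar}\frac1m\log\frac{1/m}{q_{T,i}}=\KL(U_m\|q_T)$. The vector $q_T$ is a genuine probability vector on $\Astar$ because $\sum_{i\in\Astar}N_i(T)=H_T$. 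Positivity of the counts, guaranteed by $N_i(T)\ge 2$, makes every logarithm finite. This yields the exact decomposition.

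For the $\chi^2$-type upper bound, I will use the standard fact that KL divergence is dominated by the $\chi^2$ divergence. Jensen's inequality (concavity of $\log$) gives
\[
\KL(U_m\|q_T)=\sum_i \tfrac1m\log\frac{1/m}{q_{T,i}}\le \log\sum_i \frac{1/m^2}{q_{T,i}}=\log\bigl(1+\chi^2(U_m\|q_T)\bigr)\le \chi^2(U_m\|q_T).
\]
Here $\chi^2(U_m\|q_T)=\sum_i (1/m-q_{T,i})^2/q_{T,i}$. Substituting $q_{T,i}=N_i(T)/H_T$ gives
\[
\frac{(1/m-N_i/H_T)^2}{N_i/H_T}=\frac{(H_T/m-N_i)^2}{H_T N_i},
\]
which is exactly the stated bound. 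The main thing to get right is this algebraic rescaling; the inequality itself is standard.

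For the consequences, I first note $\KL(U_m\|q_T)\ge0$ by Gibbs' inequality. Then $\F_T\ge -\log(1-B_T/T)\ge B_T/T$, using $-\log(1-x)\ge x$ on $[0,1)$; here $B_T<T$ because the Pareto arms have positive counts. Nonnegativity $\F_T\ge0$ follows in the same way. For the upper bound $\F_T\le\log(T/m)$, I use \eqref{eq:fairness} directly: since $N_i(T)\ge1$, each term satisfies $\log\frac{T}{mN_i(T)}\le\log(T/m)$, and averaging preserves the bound. I expect no real obstacle; the only care needed is confirming that $q_T$ is normalized and that all counts are positive, so that every expression is well defined.
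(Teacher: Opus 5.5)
Your proposal is correct and follows the paper's proof essentially step for step. You use the same substitution $N_i(T)=H_Tq_{T,i}$ to split $\F_T$, the same $\KL\le\chi^2$ bound with the rescaling $q_{T,i}=N_i(T)/H_T$, and the same facts ($-\log(1-z)\ge z$, nonnegativity of both terms, positive counts) for the consequences. The only addition is that you derive $\KL\le\chi^2$ via Jensen through $\log(1+\chi^2)$, which the paper simply quotes as a known inequality.
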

\begin{proof}
Let us first substitute $N_i(T)/T=(H_T/T)q_{T,i}$ into \eqref{eq:fairness}. By the relationship  $\KL(P\|Q)\le\chi^2(P\|Q)$, with $P=U_m$ and $Q=q_T$, we obtain that 
$\chi^2(U_m\|q_T)=\sum_i(1/m-q_{T,i})^2/q_{T,i}=H_T^{-1}\sum_i (N_i(T)-H_T/m)^2/N_i(T)$. Lastly, the bound $\F_T\le\log(T/m)$ follows from $N_i(T)\ge2$. It is worth noting that both terms in \eqref{eq:decomp} are nonnegative, and $-\log(1-z)\ge z$ which implies that $\F_T\ge B_T/T$. 
\end{proof}

The first term is closely related to the concept of Pareto regret \citep{drugan2013designing, xu2023pareto}, which penalizes pulls of dominated arms through a weighted sum of $N_i(T)$, where the weight is determined by arm $i$'s distance to the Pareto front. In contrast, this new metric penalizes $B_T$ through a logarithmic transformation. But interestingly, $-\log(1-B_T/T)\approx B_T/T$ when $B_T/T$ is small, so it resembles a normalized version of Pareto regret when all weights are equal. This decomposition is very useful in that it shows that this target not only requires the allocation to be as close to uniform across Pareto optimal arms as possible, but also necessitates allocating as much as possible to the Pareto optimal set of arms. Hence, optimizing this target gives the best of both worlds, in terms of both efficiency and balance, in a rough sense; later, we provide precise evidence for this.

\section{Methodology}

Toward this goal, we hope to design a new learning method, which is the focus of this section. Our starting point is the observation that the one-step change in unfairness satisfies
\[
\mathcal{F}_t-\mathcal{F}_{t-1}
=
\log\frac{t}{t-1}
-
\frac{\mathbb{I}\{A_t\in\mathcal{A}^{\star}\}}{m}
\log\left(
1+\frac{1}{N_{A_t}(t-1)}
\right).
\]
If the Pareto set were known, minimizing the one-step fairness loss would amount to maximizing
\[
\frac{\mathbb I\{i\in\mathcal A^\star\}}{m}
\log\left(1+\frac1{N_i(t)}\right).
\]
The arm selection then becomes quite straightforward. Interestingly, this greedy selection is closely related to the
Frank--Wolfe method \citep{jaggi2013revisiting, lacoste2015global} for constrained optimization. To see this,
we are essentially minimizing the KL divergence
$\mathcal F(\pi)=D_{\mathrm{KL}}(\pi^\star\|\pi)$
over the probability simplex on $([K],2^{[K]})$ where $\pi$ has to be an empirical measure. At each step, Frank--Wolfe
selects a vertex of the simplex that minimizes the first-order
approximation of the objective. Concretely, a vertex
$e_i$ corresponds to an arm $i$, and the partial derivative is exactly 
\[
\frac{\partial\mathcal F(\pi)}{\partial\pi_i}
=
-\frac{\mathbb I\{i\in\mathcal A^\star\}}{m\pi_i}.
\]
Thus, it amounts to
selecting the Pareto optimal arm with the smallest $\widehat\pi_{t,i}=N_i(t)/t$.
This coincides with our rule by noting that 
$\frac{\mathbb I\{i\in\mathcal A^\star\}}{m}
\log\left(1+\frac1{N_i(t)}\right) = \frac{1}{m}\log(1+1/N_i(t))$ for such arms.
Moreover, pulling arm $i$ updates the empirical allocation as
\[
\widehat\pi_{t+1}
=
\widehat\pi_t+\frac{1}{t+1}(e_i-\widehat\pi_t),
\]
which is precisely a Frank--Wolfe update with step size
$1/(t+1)$. It is inspiring to see this connection although we use the exact change whereas Frank--Wolfe uses 
the first-order approximation.

Unfortunately, Pareto membership is unknown, which is why this problem is hard and why we believe this work is worth developing. A direct idea is to estimate the Pareto front $\mathcal A^{\star}$, and replace the quantity $\mathcal A^{\star}$ in the above loss with this estimation. However, as we shall foresee, this estimation can be notoriously hard, and a wrong estimation of the Pareto membership of just one arm could flip the binary variable $I\{i\in\mathcal A^\star\}$ entirely and thus directly propagates to the loss estimation and further the decision making. Each observation affects not only the empirical allocation but also the learner's knowledge of the target distribution, resembling a connection to the classical knowledge-gradient principle and the value of information. To this end, we hope to construct a soft target
that incorporates both the empirical dominance margin and
the statistical evidence for Pareto membership. The soft target replaces this \emph{discrete weight} $I\{i\in\mathcal A^\star\}$ by a \emph{continuous score} using  dominance and statistical evidence of arms accumulated thus far, and we use the resulting fairness gains to guide arm sampling. 

More specifically, at time $t$ and for arm $i$, let $\widehat\mu_{t,i}$ be the empirical mean, and $\widetilde\mu_{t,i}=\Pi_{\mathcal M}(\widehat\mu_{t,i})$ its coordinate-wise projection onto $\mathcal M$.  We first measure how strongly arm $i$ appears to be dominated
through the empirical dominance margin
\begin{equation}\label{eq:Delta}
\widehat\Delta_{t,i}=
[\max_{j\ne i}\min_{r\in[d]}
(\widetilde\mu_{t,j}^{(r)}-\widetilde\mu_{t,i}^{(r)})]_+,
\quad
R_{t,i}=\nicefrac{\sqrt{N_i(t)}}{\sigma}\widehat\Delta_{t,i}.
\end{equation}
A positive $\widehat\Delta_{t,i}$ indicates that another arm
has a larger empirical mean in every objective and thus arm $i$ is empirically dominated; we would prefer arms with small $\widehat\Delta_{t,i}$. The quantity
$R_{t,i}$ further accounts for the number of observations
collected from arm $i$ in an effort to balance exploration (through $N_i(t)$) and exploitation (through $\widehat\Delta_{t,i}$). Importantly, we also need to take into account the sensitivity of Pareto membership to estimation error as these quantities are all empirical, and thus we thus adopt a soft-max style criteria as follows by incorporating a temperature. Specifically, for any
candidate instance $x\in\mathcal M^K$, let
\[
\Alt_i(x)=
\left\{\nu\in\mathcal M^K:
\ind\{i\in\Astar(\nu)\}
\ne\ind\{i\in\Astar(x)\}
\right\}
\]
denote the set of alternative instances under which arm $i$
has different Pareto membership. We then define
\begin{equation}\label{eq:Z}
Z_{t,i}=
\inf_{\nu\in\Alt_i(\widetilde\mu_t)}
\sum_{a=1}^K
\nicefrac{N_a(t)}{2\sigma^2}
\lVert\widetilde\mu_{t,a}-\nu_a\rVert_\infty^2,
\quad \inf\varnothing=+\infty.
\end{equation}
Thus, $Z_{t,i}$ measures the smallest change in the empirical
means, weighted by their sample counts, needed to reverse
the Pareto membership of arm $i$. It organically conveys the sensitivity of the current estimation of arm $i$; a larger value means more stable and thus should be more preferred.  We are now ready to define an adaptive temperature by reversely depending on the value of $Z_{t,i}$. 
For $c_\tau>0$, let us define
\begin{equation}\label{eq:soft}
\tau_{t,i}=\nicefrac{c_\tau}{\sqrt{1+Z_{t,i}}}.
\end{equation}
By default, we set 
$c_\tau=1$ unless explicitly stated otherwise. Then we find the soft target 
\(w_{t,i}=
\frac{\exp(-R_{t,i}/\tau_{t,i})}
{\sum_{j=1}^K\exp(-R_{t,j}/\tau_{t,j})}\) as a proxy for the original target $\frac{\mathbb I\{i\in\mathcal A^\star\}}{m}
$. 
As $Z_{t,i}$ increases, the temperature decreases, the weight for arm $i$ also drops. By doing so, it  serves as a penalty for the stability to incentivize exploring less stable arms. 

Finally, we derive the one-step fairness gain by 
$w_{t,i}\log(1+1/N_i(t))$. Since the normalization in
$w_{t,i}$ is common to all arms, the same ranking follows from 
\begin{equation}\label{eq:gain}
G_{t,i}=
\exp\{
-\nicefrac{R_{t,i}\sqrt{1+Z_{t,i}}}{c_\tau}
\}
\log(1+\nicefrac1{N_i(t)}).
\end{equation}

Though greedy selection appears appealing, it can be less exploratory given the uncertainties that we face in multiple coordinates. A top-two sampling idea has been proposed \citep{russo2020simple, qin2026dual} to sharply cope with this by maintaining two plausibly good arms and randomly selecting from them. We customize this idea through the design of the criterion and the sampling probability. Precisely, let $b_t,c_t$ be the two distinct arms with largest and second largest gains (i.e., the top two maximizers of $G_{t,i}$), and let the empirical Pareto front be $\widehat\Astar_t=\Astar(\widetilde\mu_t)$.

There are two alternatives that we can leverage to decide how to select among the top two arms, which are typical hard parts of adopting top-two algorithms (but very fair enough as a price for their elegancy and interpretability). First, the leader is selected with a constant probability $p$, e.g., $0.70$, and the challenger otherwise. When $|\widehat\Astar_t|=1$, the leader is selected surely. 

Second, the probability adaptively responds to the difference in the two candidates' sample counts:
\begin{equation}\label{eq:beta}
p_t=\begin{cases}
1,&|\widehat\Astar_t|=1,\\
\clip_{[0.45,0.95]}[0.70+0.25\tanh\left(\dfrac{N_{c_t}(t)-N_{b_t}(t)}{1+0.15\{N_{b_t}(t)+N_{c_t}(t)\}}\right)],
&b_t,c_t\in\widehat\Astar_t,\\
p, e.g., 0.70,&\text{otherwise}.
\end{cases}
\end{equation}
These parameter choices, such as $0.7$, are  for analytical convenience; in practice, we can tune them as needed. Subsequently, we choose the leader arm $b_t$ with probability $p_t$, and the challenger arm $c_t$ otherwise. Notably, after reliable identification, the two largest gains correspond to the two least-sampled true Pareto arms, for which $R_{t,i}=0$ and the gains reduce to $\log(1+1/N_i(t))$. The leader, which has the smaller arm count, exhibits a probability at least $0.70>1/2$, thereby favoring exploration. The full procedure of the proposed method reads as  Algorithm~\ref{alg:ungated}.

\begin{algorithm}[!htbp]
\caption{Soft-target adaptive top-two sampling}\label{alg:ungated}
\small
\begin{algorithmic}[1]
\Require Horizon $T\ge 2K$, $c_\tau>0$, and a fixed choice of the rule for $p_t$
\State Pull every arm twice; initialize $N_i(2K)$ and $\widetilde\mu_{2K,i}$
\For{$t=2K+1,\ldots,T$}
  \State Compute $R_{t,i}$, $Z_{t,i}$, and $G_{t,i}$ using \eqref{eq:Delta}--\eqref{eq:gain}
  \State Find the two distinct arms $b_t,c_t$ with largest gains
  \State Determine $p_t$ using the chosen rule
  \State Select $A_t=b_t$ with probability $p_t$; otherwise select $A_t=c_t$
  \State Pull $A_t$ and update the counts and empirical means
\EndFor
\end{algorithmic}
\end{algorithm}

\section{Theoretical Guarantee}

To the best of our knowledge, progress on provably balanced allocation has been quite limited in multi-objective bandits, largely due to the lack of a coherent and well-defined measure of ``balance''. It is our understanding that no bounds on the fairness measure across Pareto optimal arms exist. Fortunately, the new framework developed herein allows us to take steps in this direction with the powerful handle of the mature theory on KL divergence and distributional convergence, as well as the new method. We proceed as follows: we first derive an upper bound for this method in Section~\ref{upper_bound} and then sketch an information-theoretic lower bound for any reasonable policy in Section~\ref{lower_bound}.

\subsection{Upper Bound}\label{upper_bound}

Traditionally, we prove upper bounds on performance measures in expectation, as commonly seen in the bandit literature. Therefore, we first bound the expected fairness in this section. Naturally, an upper bound on expected fairness also bounds the pseudo-fairness $\bar{\mathcal{F}}_T$, defined over the expected allocation, by the convexity of KL divergence. Beyond this, it also seems both appealing and necessary to study the variance of fairness. By definition, the KL divergence depends inversely on the empirical measure induced by a sequence of random arm selections and thus may vary substantially across trajectories. The variance thus provides a measure of how reliably the policy achieves fair allocation, as ultimately we have a limited number of trajectories in practice. 

For the analysis, we introduce a separation margin that measures the sensitivity of the Pareto membership. Specifically, let us define
\begin{equation}\label{eq:margins}
\eta_\mu^{\rm P}
=\min_{i\in\Astar}\min_{j\ne i}\max_{r\in[d]}
(\mu_i^{(r)}-\mu_j^{(r)}), \quad  
\eta_\mu^{\rm D} =\min_{i\notin\Astar}\max_{j\ne i}\min_{r\in[d]}
(\mu_j^{(r)}-\mu_i^{(r)}),
\end{equation}
where $\eta_\mu^{\rm D}=+\infty$ if all arms are Pareto optimal.
The first margin, $\eta_\mu^{\rm P}$, measures the smallest advantage that a Pareto optimal arm has over any other arm in at least one objective. The second margin, $\eta_\mu^{\rm D}$, measures how clearly we can establish that an arm is dominated. Since it suffices to find one arm that outperforms it in every objective, we consider the arm with the largest minimum advantage across objectives. Taking the minimum over all dominated arms then identifies the hardest boundary.

Let $\eta_\mu=\min\{\eta_\mu^{\rm P},\eta_\mu^{\rm D}\}$.
We assume $\eta_\mu>0$, which ensures that every Pareto optimal arm is strictly better than every other arm in at least one objective, while every dominated arm is strictly dominated by some arm in all objectives, in analogue to strict dominance \citep{kone2024bandit}. Consequently, Pareto membership is preserved under sufficiently small perturbations of the mean vectors. It can be arbitrarily small and is unknown; smaller ones mean more sensitivity and harder problems.

Throughout the following theorems, we consider that $\mathcal M$ is bounded, $\eta_\mu>0$, and
\begin{equation}\label{eq:large-temp}
c_\tau>
32\sqrt{2}\frac{B_{\mathcal M}^2}{\eta_\mu^2},
\qquad
B_{\mathcal M}=\max_{r\in[d]}(b_r-a_r).
\end{equation}
To state the bounds, let us define
\(
s=
1-\frac{32\sqrt{2}B_{\mathcal M}^2}
{c_\tau\eta_\mu^2}\in(0,1)
\)
and
\begin{equation}\label{eq:ungated-horizon}
T_0=
\lceil
4\max\{
[
128K(1+\frac{384\sigma^2}{s\eta_\mu^2})
]^{8/s},
\;
\frac{4Kd}{s}
(1+\frac{32\sigma^2}{\eta_\mu^2}),
\;
[
\frac{32(K-m)c_\tau^2\sigma^2}{\eta_\mu^2}
]^2
\}
\rceil.
\end{equation}
The quantity $s$ measures how far the temperature parameter
lies above its sufficient lower bound. The three terms in
$T_0$ account for exploration of all arms, accumulated
estimation errors, and pulls of dominated arms, respectively.

\begin{theorem}
\label{thm:ungated}
Under the preceding conditions, for every $T\ge T_0$, we have 
\begin{align}\label{eq:ungated-mean}
\bar{\mathcal F}_T
\le \E_\mu\F_T
&\le
\frac{8}{T}
\left[
(K-m)\left(
3+\frac{c_\tau^2\sigma^2}{\eta_\mu^2}\log(T+1)
\right)
+\sqrt{T_0}+16m
\right]
+\frac{512T_0}{T^2}\log\frac{T}{m}.
\end{align}
Moreover,
\begin{align}\label{eq:ungated-variance}
\Var_\mu(\F_T)
&\le
\frac{128}{T^2}
[
(K-m)(
3+\frac{c_\tau^2\sigma^2}{\eta_\mu^2}\log(T+1)
)
+\sqrt{T_0}
]^2 
+\frac{16384m^2}{T}
+\frac{512T_0}{T^2}\log^2\frac{T}{m}.
\end{align}
\end{theorem}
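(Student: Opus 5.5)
We plan to use the decomposition of Lemma~\ref{lem:decomp}, $\F_T=-\log(1-B_T/T)+\KL(U_m\|q_T)$. The analysis runs on a good event $\mathcal G$ and bounds its complement crudely via $\F_T\le\log(T/m)$. We let $\mathcal G$ be the event that for every $t\ge\sqrt{T}$ (or $t\ge T_0^{1/2}$-type burn-in) and every arm $a$, $\lVert\widetilde\mu_{t,a}-\mu_a\rVert_\infty\le \eta_\mu/4$ whenever $N_a(t)$ exceeds a threshold of order $(\sigma^2/\eta_\mu^2)\log T$. A sub-Gaussian maximal inequality with a union bound over $d$ coordinates, $K$ arms and counts gives $\Prob(\mathcal G^c)\lesssim T_0/T^2$. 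This produces the $512T_0\log(T/m)/T^2$ term in the mean and the $\log^2(T/m)$ term in the variance. The first step is to show that every arm keeps being explored. The gain $G_{t,i}$ is bounded below by $\exp\{-\sqrt{N_i}B_{\mathcal M}\sqrt{1+Z}/(\sigma c_\tau)\}\log(1+1/N_i)$, and $Z_{t,i}\lesssim N\,B_{\mathcal M}^2/\sigma^2$. Choosing $c_\tau$ large as in \eqref{eq:large-temp} makes this decay only polynomially, with exponent governed by $1-s$. Hence any arm with fewer than $t^{s/8}$ pulls eventually becomes a top-two candidate. Since the challenger is chosen with probability at least $0.05$, it gets pulled. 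This is the role of the first term of $T_0$.

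The second step bounds the dominated pulls $B_T$. On $\mathcal G$ and once $N_i(t)$ is large, a dominated arm $i$ has $\widehat\Delta_{t,i}\ge\eta_\mu/2$ and $Z_{t,i}\gtrsim N_i(t)\eta_\mu^2/\sigma^2$. Its gain is then at most $\exp\{-N_i\eta_\mu^2/(c_\tau\sigma^2)\cdot c\}$. By contrast, some true Pareto arm with $R=0$ has gain at least $\log(1+1/t)\ge 1/(2t)$. So arm $i$ can enter the top two only while $N_i(t)\le (c_\tau^2\sigma^2/\eta_\mu^2)\log(T+1)$, up to the $c_\tau$ power dictated by the form $R\sqrt{1+Z}/c_\tau$. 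This gives $\E[B_T\ind_{\mathcal G}]\le (K-m)(3+c_\tau^2\sigma^2\eta_\mu^{-2}\log(T+1))+\sqrt{T_0}$, and consequently the logarithmic Pareto regret. The third term of $T_0$ guarantees $B_T/T\le1/2$, so that $-\log(1-B_T/T)\le 2B_T/T$. Together these give the $(K-m)$ part of \eqref{eq:ungated-mean}, and squaring gives the first variance term.

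The third step, and the main obstacle, is the balance term $\KL(U_m\|q_T)$. We bound it by the $\chi^2$ expression of Lemma~\ref{lem:decomp}, so we need $|N_i(T)-H_T/m|=O(1)$ in second moment for $i\in\Astar$. After the burn-in time (of order $\sqrt{T_0}$), every Pareto arm has $R_{t,i}=0$ on $\mathcal G$. The top two are then the two least-sampled Pareto arms, with $p_t$ given by \eqref{eq:beta}, except at the $O(B_T)$ times when a dominated arm intrudes. We will treat the centered counts $D_i(t)=N_i(t)-H_t/m$ as a Lyapunov process. Under the rule in which the least-sampled arm is pulled with probability at least $0.7$, the potential $V_t=\sum_i D_i(t)^2$ has negative drift whenever $\max_i|D_i|$ exceeds a constant. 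A Hajek-type drift argument then yields $\E V_T=O(m^2)$ uniformly, plus contributions from the burn-in imbalance of order $\sqrt{T_0}$ and from the intrusions. Since $N_i(T)\ge H_T/(2m)$ on $\mathcal G$, dividing by $H_T\cdot N_i$ gives $\E\KL\le 16m\cdot8/T$-type terms and the variance term $16384m^2/T$. The delicate point is that the residual imbalance created before the burn-in must be absorbed. We will argue that it is erased at a linear rate by the preference for the least-sampled arm, so it contributes only the $\sqrt{T_0}/T$ term. Finally, the variance bound follows from $\Var\le\E\F_T^2$ together with $(a+b)^2\le2a^2+2b^2$ applied to the two decomposition terms, plus the $\mathcal G^c$ contribution.
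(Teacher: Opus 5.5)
Your proposal has a genuine gap in how inaccurate rounds are controlled. Everything rests on one event $\mathcal G$. On $\mathcal G$, every round after a short burn-in (of order $\sqrt T$ or $\sqrt{T_0}$) is accurate, and $\Prob(\mathcal G^c)\lesssim T_0/T^2$. Neither the algorithm nor your argument delivers this.

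Your exploration claim, that every arm reaches $t^{s/8}$ pulls, contradicts your own second step. There you correctly note that a dominated arm's gain decays like $e^{-cN_i(t)}$, so such an arm is pulled only $O(\log t)$ times. What can actually be proved is logarithmic exploration, $\min_iN_i(t)\ge\alpha\log(t+1)$ with $\alpha B_{\mathcal M}^2/(\sqrt2\sigma^2c_\tau)<1$. The paper proves this indirectly, not through the challenger probability: an under-explored arm caps every leader's count, which contradicts the $\mathrm{Bin}(\lfloor t/2\rfloor,2/5)$ lower bound on leader pulls.

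At that exploration level, a union bound controls the probability that round $t$ is inaccurate only to order $t^{-2-s}$. A single global event then forces a bad choice. With a short burn-in, the failure probability can only be bounded by about $(\text{burn-in})^{-1-s}\gg T_0/T^2$. With a polynomially long burn-in, neither $B_T$ nor the imbalance is controlled during it.

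Three further steps do not go through:
\begin{itemize}
\item Accuracy says nothing about the top-two coin flips, so ``$N_i(T)\ge H_T/(2m)$ on $\mathcal G$'' is unjustified.
\item The bound $B_T/T\le1/2$ does not hold pathwise.
\item Squaring a bound on $\E[B_T\ind_{\mathcal G}]$ does not give a bound on $\E B_T^2$.
\end{itemize}

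The missing idea is to count the inaccurate rounds, $M_T=\sum_{t=2K}^{T-1}\ind\{\mathcal E_t^c\}$, and to prove $\E_\mu M_T^2\le2\sum_t(t+1)\Prob_\mu(\mathcal E_t^c)\le T_0$. This gives the pathwise bound $B_T\le(K-m)(3+c_\tau^2\sigma^2\eta_\mu^{-2}\log(T+1))+M_T$, and Minkowski's inequality then produces the $\sqrt{T_0}$ term. It also gives the count event $\mathcal C_T=\{\min_{i\in\Astar}N_i(T)\ge T/(8m)\}$ with $\Prob_\mu(\mathcal C_T^c)\le512T_0/T^2$. That bound follows from Markov's inequality on $\{M_T>n/16\}$, where $n=T-2K$, together with Azuma's inequality for pulls of a least-sampled Pareto arm. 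Such pulls have conditional probability at least $0.30$ on accurate rounds that have a Pareto candidate. This, not a tape-concentration event, is the source of the $512T_0/T^2$ terms.

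Your imbalance step also overreaches, and it does not produce the constants you quote. In ATT, unlike EG-ATT, there is no release time after which all rounds are accurate. Inaccurate rounds occur at random times over the whole horizon, and only the second moment of their number is controlled. A Hajek argument would have to absorb these perturbations, and the variance bound would need a fourth moment of the centered counts; your sketch supplies neither. Moreover, $\E V_T=O(m^2)$, if true, would make the balance term $O(m^3/T^2)$, not the $128m/T$ and $16384m^2/T$ you write down.

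Those constants come from a much cruder estimate, and that estimate is all the theorem needs:
\begin{itemize}
\item On an accurate round, a lone Pareto candidate is a least-sampled Pareto arm. If both candidates are Pareto, they are the two least-sampled ones; their centered counts sum to at most zero, and the smaller one is chosen with probability at least $1/2$. Dominated pulls leave $\Phi_t$ unchanged. Hence $\E_\mu[\Phi_{t+1}-\Phi_t\mid\mathscr F_t]\le1$ on accurate rounds.
\item On every round, $|\Phi_{t+1}-\Phi_t|\le2t+1$. Combined with the previous point, $\E_\mu\Phi_T\le T+T_0\le2T$.
\item Since $\Phi_T\le T^2$, it follows that $\E_\mu\Phi_T^2\le T^2\E_\mu\Phi_T\le2T^3$.
\item On $\mathcal C_T$, $H_T\ge T/8$ gives $\F_T\le8B_T/T+64m\Phi_T/T^2$. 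Off $\mathcal C_T$, $\F_T\le\log(T/m)$.
\end{itemize}
These yield exactly $8\cdot16m/T$ in the mean and $2\cdot64^2m^2\cdot2T^3/T^4=16384m^2/T$ in the variance. No erasure of pre-burn-in imbalance is needed, and the $\sqrt{T_0}/T$ term comes from $\E_\mu B_T$, not from residual imbalance.
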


In other words, we have $\E_\mu\F_T = O(\frac{\log T}{T})$ and $\Var_\mu(\F_T) = O(\frac{1}{T})$. Here, $T_0$ and the coefficients in the bounds jointly reflect the difficulty of the problem instance through their dependence on $\eta_{\mu}$, $\sigma$, $B_{\mathcal{M}}$, $K$, and $m = |\mathcal{A}^{\star}|$. In particular, $T_0$ increases as the temperature approaches its lower bound, corresponding to $s\downarrow0$, which may result from a small $\eta_{\mu}$, a feature of a hard problem instance. Indeed, we can further decompose the bounds into contributions from pulls of dominated arms and imbalance among Pareto optimal arms, which more precisely identify the sources of unfairness.
 
\begin{theorem}[Arm pulls and imbalance]
\label{thm:ungated-moments}
Under the same conditions, recall that 
\(
B_T=\sum_{i\notin\Astar}N_i(T),
\Phi_T=
\sum_{i\in\Astar}
\left(N_i(T)-\frac{T-B_T}{m}\right)^2.
\)
For every $T\ge2K$,
\begin{equation}\label{eq:ungated-B-moments}
\E_\mu B_T
\le
\sqrt{\E_\mu B_T^2}
\le
(K-m)\left(
3+\frac{c_\tau^2\sigma^2}{\eta_\mu^2}\log(T+1)
\right)
+\sqrt{T_0}.
\end{equation}
The imbalance potential satisfies
\begin{equation}\label{eq:ungated-Phi-moments}
\E_\mu\Phi_T\le T+T_0,
\qquad
\E_\mu\Phi_T^2\le T^2(T+T_0).
\end{equation}
\end{theorem}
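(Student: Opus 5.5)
We split the statement into the dominated-pull bound (\ref{eq:ungated-B-moments}) and the imbalance bounds (\ref{eq:ungated-Phi-moments}), and prove each on a good event plus a rare bad event.

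\textbf{Good event and dominated pulls.} For $\epsilon=\eta_\mu/4$, let $\mathcal E_t$ be the event that $\|\widetilde\mu_{t,a}-\mu_a\|_\infty\le\epsilon$ for all arms. Projection onto $\mathcal M$ is a coordinatewise contraction, so a sub-Gaussian maximal inequality over counts and a union bound over $K$ arms and $d$ coordinates give $\Prob(\mathcal E_t^c)\lesssim Kd\,t^{-2}$ once every arm is sampled at rate $\gtrsim\log t$. On $\mathcal E_t$, every dominated $i$ has $\widehat\Delta_{t,i}\ge\eta_\mu/2$, so $R_{t,i}\ge\sqrt{N_i(t)}\,\eta_\mu/(2\sigma)$. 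Every Pareto optimal arm has $\widehat\Delta=0$, so $G_{t,j}=\log(1+1/N_j(t))\ge 1/(2N_j(t))\ge 1/(2t)$. Since $Z_{t,i}\ge0$, a dominated arm can only be among the top two when
\[
\exp\{-\sqrt{N_i(t)}\,\eta_\mu/(2\sigma c_\tau)\}\ge 1/(2t).
\]
This forces $N_i(t)\le (4c_\tau^2\sigma^2/\eta_\mu^2)\log^2(2t)$. With the $\sqrt{1+Z}$ factor, which grows like $\sqrt{N_i}$ on $\mathcal E_t$, the requirement is sharpened to $N_i(t)\lesssim (c_\tau^2\sigma^2/\eta_\mu^2)\log(T+1)$. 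This is where condition (\ref{eq:large-temp}) and the constant $s$ enter: they lower-bound $Z_{t,i}$ by $s N_i(t)\eta_\mu^2/(c\sigma^2)$ via the alternative set $\Alt_i$, since moving arm $i$ off the dominated set requires shifting means by $\gtrsim\eta_\mu$.

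Summing over $i\notin\Astar$, and adding $3$ for the initialization and rounding, gives the deterministic part on $\bigcap_t\mathcal E_t$. On the complement we use $B_T\le T$. Then
\[
\E B_T^2\le (\text{good part})^2+T^2\,\Prob\Bigl(\bigcup_{t\ge\sqrt{T_0}}\mathcal E_t^c\Bigr)+T_0 .
\]
The first term of $T_0$ makes the middle contribution at most $T_0$, and the pulls before time $\sqrt{T_0}$ contribute at most $\sqrt{T_0}$. Taking square roots and applying Minkowski gives (\ref{eq:ungated-B-moments}).

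\textbf{Imbalance potential.} Write $H_t=t-B_t$. On the good event, when the selected arm is Pareto optimal, $b_t$ and $c_t$ are the two least-sampled Pareto arms, because $G$ is decreasing in $N$. The leader is chosen with probability $p_t\ge0.45$, and with probability $\ge0.7$ whenever the counts differ. Expanding $\Phi$ after one Pareto pull of arm $a$ gives
\[
\Phi_{t}-\Phi_{t-1}=2\bigl(N_a-H_{t-1}/m\bigr)+(1-1/m).
\]
The conditional drift $\E[\,2(N_{A_t}-H/m)\mid H_{t-1}]$ is $\le0$, since the pulled arm has below-average count in expectation. Dominated pulls leave $\Phi$ unchanged. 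Steps on $\mathcal E_t^c$ change $\Phi$ by at most $O(t)$, but these are summable against $\Prob(\mathcal E_t^c)\lesssim t^{-2}$, which yields the additive $T_0$. Hence $\E\Phi_T\le T+T_0$.

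For the second moment, use $\Phi_T\le T^2$ deterministically, so $\E\Phi_T^2\le T^2\,\E\Phi_T$.

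\textbf{Main obstacle.} The delicate step is the lower bound on $Z_{t,i}$ for dominated arms on the good event. The infimum over $\Alt_i$ can be attained by moving the dominating arm $j$ instead of $i$. If $N_j$ is small, this makes $Z$ small. I would handle it by noting that the gain of such a $j$ is then large, so $j$ is pulled quickly and $N_j$ catches up. The $\log$ bound on $N_i$ would then be shown only after this catch-up time, which is again absorbed into $\sqrt{T_0}$.
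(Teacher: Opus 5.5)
Your imbalance argument is essentially the paper's: drift at most one on accurate rounds, increments at most $2t+1$ otherwise, then $\Phi_T\le T^2$. The dominated-pull part, however, has two genuine gaps.

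\textbf{Missing exploration guarantee.} You assume every arm is sampled $\gtrsim\log t$ times. Your concentration bound depends on this, and so does your sharpening via $\sqrt{1+Z_{t,i}}$. On the good event you only get $Z_{t,i}\ge\frac{\eta_\mu^2}{8\sigma^2}\frac{N_i(t)N_{p_i}(t)}{N_i(t)+N_{p_i}(t)}$, which gives nothing beyond your $\log^2$ bound unless the dominating Pareto arm has $N_{p_i}(t)\gtrsim\log t$.

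Your catch-up remedy is circular. The claim ``$p_i$ has large gain'' uses $\widehat\Delta_{t,p_i}=0$, i.e.\ the good event, and the good event itself needs $p_i$ to be well sampled.

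Your account of \eqref{eq:large-temp} is also backwards. $Z_{t,i}$ does not involve $c_\tau$ at all, and a larger $c_\tau$ only softens the penalty on dominated arms; the dominated-pull bound grows like $c_\tau^2$. What the condition actually buys is a trajectory-wise \emph{upper} bound on the exponent, valid with no accuracy assumption:
\begin{itemize}
\item Moving arm $i$ to the upper corner of $\mathcal M$ gives $Z_{t,i}\le N_i(t)B_{\mathcal M}^2/(2\sigma^2)$.
\item With $\widehat\Delta_{t,i}\le B_{\mathcal M}$, this yields $R_{t,i}\sqrt{1+Z_{t,i}}/c_\tau\le B_{\mathcal M}^2N_i(t)/(\sqrt2\sigma^2c_\tau)+1/(\sqrt2c_\tau)$.
\item Hence an arm with $N_a(t)<\alpha\log(t+1)$ keeps gain $\gtrsim(t+1)^{-1+s/2}/\log t$ throughout $[t/2,t)$.
\item So every leader there has count $O(t^{1-s/2}\log t)$, and leader pulls in that window total at most $t/16$.
\item A $\operatorname{Bin}(\lfloor t/2\rfloor,2/5)$ lower bound on leader pulls rules this out except with probability $e^{-t/64}$.
\end{itemize}

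The rate matters too. Taking $\alpha=(64+32s)\sigma^2/\eta_\mu^2$ yields $\Prob_\mu(\mathcal E_t^c)\lesssim(t+1)^{-2-s}$. Your $t^{-2}$ makes $\sum_t(2t+1)\Prob_\mu(\mathcal E_t^c)$ diverge logarithmically, so even your $\Phi$ step would not produce the additive $T_0$.

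\textbf{Second moment of $B_T$.} Bounding $B_T\le T$ off $\bigcap_t\mathcal E_t$ cannot work. The algorithm does not use $T$, so $\Prob_\mu\bigl(\bigcup_{\sqrt{T_0}\le t\le T}\mathcal E_t^c\bigr)$ is nondecreasing in $T$ (and positive in general). Then $T^2$ times it cannot be absorbed into a fixed multiple of $T_0$ uniformly over $T\ge2K$.

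The paper instead uses the pathwise bound
\begin{equation*}
B_T\le(K-m)\Bigl(3+\frac{c_\tau^2\sigma^2}{\eta_\mu^2}\log(T+1)\Bigr)+M_T ,
\end{equation*}
where $M_T=\sum_t\ind\{\mathcal E_t^c\}$ charges each inaccurate round at most one dominated pull. It combines this with $\E_\mu M_T^2\le2\sum_t(t+1)\Prob_\mu(\mathcal E_t^c)\le T_0$, obtained by expanding the square and using $\Prob_\mu(\mathcal E_u^c\cap\mathcal E_t^c)\le\Prob_\mu(\mathcal E_t^c)$ for $u\le t$. Minkowski's inequality then gives \eqref{eq:ungated-B-moments}.
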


Notably, when $m=1$, $\Phi_T=0$ identically, which is reduced to the optimization of efficiency in spirit to minimizing the Pareto regret; the functional is different though. 

The proof sketch is as follows; we defer the complete proof to Appendix \ref{app:ungated-proofs}. We first show that the condition on the temperature $c_{\tau}$ ensures logarithmic
exploration of every arm with high probability.
Combining this with concentration of the empirical means and positive
Pareto separation, we bound the second moment of the number of rounds
that fail the required sampling or accuracy conditions by $T_0$.
On the remaining rounds, the empirical Pareto set is correct, and a
dominated arm can be selected only while its count is at most
$(c_\tau^2\sigma^2/\eta_\mu^2)\log(t+1)$, since its gain otherwise
falls below those of the Pareto optimal arms. This gives the bounds
on dominated-arm pulls. Meanwhile, the gains of Pareto optimal arms
reduce to $\log(1+1/N_i(t))$, and the selection rule simply favors smaller
counts. We use this to show that the conditional expected
increase of $\Phi_t$ is at most one on these rounds, yielding
$\E_\mu\Phi_T\le T+T_0$. Its second moment then follows from
$\Phi_T^2\le T^2\Phi_T$.
To translate these bounds into fairness guarantees, we show that,
for $T\ge T_0$, every Pareto optimal arm receives at least $T/(8m)$
pulls except on an event of probability at most $512T_0/T^2$. Then the exact decomposition yields 
$\F_T\le8B_T/T+64m\Phi_T/T^2$; otherwise, we use the fact that 
$\F_T\le\log(T/m)$ in Lemma \ref{lem:decomp}.
Taking first and second moments yields the expected unfairness
and variance bounds. Finally, convexity of KL divergence gives
$\bar{\mathcal F}_T\le\E_\mu\F_T$.

It is useful to compare our sufficient horizon $T_0$ with
the sample complexity 
for Pareto set identification in \citet{garivier2024sequential}.
The latter, which is a pure exploration problem, derives the optimal sample
complexity $T^\star(\mu)\log(1/\delta)$ as the error
probability $\delta$ vanishes. For Gaussian observations with covariance $\sigma^2 I_d$,
our positive separation condition implies
\(
T^\star(\mu)\le\frac{8K\sigma^2}{\eta_\mu^2},
\)
since perturbing every mean coordinate by less than
$\eta_\mu/2$ cannot change the Pareto set.
Thus, both $T^\star(\mu)$ and $T_0$ reflect the difficulty
of distinguishing Pareto membership through the separation
margin. However, $T_0$ also depends on the
temperature parameter $c_\tau$ and can be substantially
larger than $T^\star(\mu)$. This 
reflects the additional difficulty of balancing
allocation, the exploration-exploitation trade-off, and hypothetically, the
conservatism of the choice of $T_0$.

\subsubsection{Implications for Pareto regret}
As mentioned earlier, fairness, by its definition, already measures the efficiency of the selected arms. To see this more precisely, let us introduce Pareto regret, a widely used metric for efficiency, and then derive bounds on it. We define the Pareto suboptimality gap of arm $i$ \citep{drugan2013designing,guan2026stochastic} by
\(
\Delta_i^{\rm P}
=
[
\max_{j\ne i}\min_{r\in[d]}
\bigl(\mu_j^{(r)}-\mu_i^{(r)}\bigr)
]_+,
\)
and let $\Delta_{\max}^{\rm P}=\max_{i\in[K]}\Delta_i^{\rm P}$.
The cumulative Pareto regret is
\[
R_T^{\rm P}
=
\sum_{t=1}^T\Delta_{A_t}^{\rm P}
=
\sum_{i\notin\Astar}\Delta_i^{\rm P}N_i(T).
\]
Under the condition of positive Pareto
separation, we immediately obtain $\Delta_i^{\rm P}\ge\eta_\mu > 0$ for every
dominated arm.

\begin{corollary}[Pareto regret]
\label{cor:ungated-pareto-regret}
Under the assumptions of Theorem~\ref{thm:ungated-moments},
for every $T\ge2K$,
\begin{align}\label{eq:ungated-pareto-regret}
\E_\mu R_T^{\rm P}
&\le
c_\tau^2\sigma^2\log(T+1)
\sum_{i\notin\Astar}\frac1{\Delta_i^{\rm P}}
+
3\sum_{i\notin\Astar}\Delta_i^{\rm P}
+
\Delta_{\max}^{\rm P}\sqrt{T_0},
\end{align}
where $T_0$ is the same explicit quantity used in
Theorem~\ref{thm:ungated-moments}.
Moreover, on every trajectory,
\begin{equation}\label{eq:regret-fairness}
R_T^{\rm P}
\le
\Delta_{\max}^{\rm P}B_T
\le
\Delta_{\max}^{\rm P}T\F_T.
\end{equation}
\end{corollary}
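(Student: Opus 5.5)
The trajectory-wise inequality \eqref{eq:regret-fairness} needs no probability, so I start there. By definition $R_T^{\rm P}=\sum_{i\notin\Astar}\Delta_i^{\rm P}N_i(T)$. Each weight satisfies $\Delta_i^{\rm P}\le\Delta_{\max}^{\rm P}$, which gives $R_T^{\rm P}\le\Delta_{\max}^{\rm P}\sum_{i\notin\Astar}N_i(T)=\Delta_{\max}^{\rm P}B_T$. Lemma~\ref{lem:decomp} gives $\F_T\ge B_T/T$, that is, $B_T\le T\F_T$, and this completes the chain. The lemma applies because $N_i(T)\ge2$ for all arms.

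For the expectation bound \eqref{eq:ungated-pareto-regret}, I would not invoke the aggregate bound \eqref{eq:ungated-B-moments} as a black box. That bound is stated with the uniform factor $1/\eta_\mu^2$ and would only give $\Delta_{\max}^{\rm P}$ times it. Instead I would reuse the per-arm structure of the proof of Theorem~\ref{thm:ungated-moments} from Appendix~\ref{app:ungated-proofs}. That proof splits the rounds $t>2K$ into two kinds:
\begin{enumerate}
\item \emph{Bad rounds}, where the sampling or accuracy conditions fail. Their number $W_T$ satisfies $\E W_T\le\sqrt{\E W_T^2}\le\sqrt{T_0}$.
\item \emph{Good rounds}, where the empirical Pareto set is correct and the means are accurate.
\end{enumerate}
Then $N_i(T)\le 2+1+L_i(T)+(\text{bad-round pulls of }i)$ for each dominated arm $i$, where $L_i(T)$ is the largest count at which arm $i$ can still be selected on a good round.

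The key step is to sharpen $L_i$ per arm, replacing $\eta_\mu$ by $\Delta_i^{\rm P}$. On a good round, accuracy gives $\widehat\Delta_{t,i}\gtrsim\Delta_i^{\rm P}$ up to the factor that the appendix used with $\eta_\mu$. So $R_{t,i}\sqrt{1+Z_{t,i}}/c_\tau\ge\sqrt{N_i(t)}\,\Delta_i^{\rm P}/(\sigma c_\tau)$. The gain $G_{t,i}$ then falls below the gain of the least-sampled Pareto optimal arm once $N_i(t)>(c_\tau^2\sigma^2/(\Delta_i^{\rm P})^2)\log(t+1)$. This uses that the least-sampled Pareto arm has count at most $t$, so its gain is at least $\log(1+1/t)\ge 1/(t+1)$.

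Multiplying by $\Delta_i^{\rm P}$ and summing gives three contributions:
\begin{enumerate}
\item The good-round term gives $c_\tau^2\sigma^2\log(T+1)\sum_{i\notin\Astar}1/\Delta_i^{\rm P}$.
\item The initialization and overshoot term gives $3\sum_{i\notin\Astar}\Delta_i^{\rm P}$.
\item The bad-round term $\sum_i\Delta_i^{\rm P}(\text{bad pulls of } i)\le\Delta_{\max}^{\rm P}W_T$ has expectation at most $\Delta_{\max}^{\rm P}\sqrt{T_0}$.
\end{enumerate}

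The main obstacle is checking that the good-round threshold in the appendix genuinely scales with the arm's own gap. If the appendix's accuracy event only guarantees estimation error below a fraction of $\eta_\mu$, then $\widehat\Delta_{t,i}\ge\Delta_i^{\rm P}-\eta_\mu/2\ge\Delta_i^{\rm P}/2$ still holds, since $\Delta_i^{\rm P}\ge\eta_\mu$, and only constants change. Those constants must be absorbed exactly as the appendix absorbed them for the $\eta_\mu$ version. As a fallback, if constants do not line up, I would still get $\E R_T^{\rm P}\le\Delta_{\max}^{\rm P}\E B_T$ from \eqref{eq:regret-fairness} and Theorem~\ref{thm:ungated-moments}. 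This is a slightly weaker form of the same logarithmic rate.
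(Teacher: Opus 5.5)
\textbf{There is a gap in the per-arm threshold step.} Your trajectory-wise chain \eqref{eq:regret-fairness} is correct and matches the paper. Your plan for \eqref{eq:ungated-pareto-regret} also follows the paper's route: rerun Step~2 of the proof of Theorem~\ref{thm:ungated-moments} arm by arm, using $\Delta_i^{\rm P}-\eta_\mu/2\ge\Delta_i^{\rm P}/2$, and charge each inaccurate round $\Delta_{\max}^{\rm P}$.

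The problem is that your lower bound on the exponent uses $R_{t,i}$ alone, i.e.\ effectively $\sqrt{1+Z_{t,i}}\ge1$. That bound is only of order $\sqrt{N_i(t)}\,\Delta_i^{\rm P}/(\sigma c_\tau)$. Requiring it to exceed $\log(t+1)$, which is what pushes $G_{t,i}$ below $1/(t+1)$, excludes arm $i$ only once
\[
N_i(t)>\frac{c_\tau^2\sigma^2\log^2(t+1)}{(\Delta_i^{\rm P})^2},
\]
with an extra factor $4$ if you keep the $1/2$ from accuracy. At your claimed threshold $N_i(t)=c_\tau^2\sigma^2\log(t+1)/(\Delta_i^{\rm P})^2$, the exponent is only $\sqrt{\log(t+1)}$. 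As written, your argument gives $O(\log^2 T)$ regret, not the stated bound.

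A smaller point: comparing with $1/(t+1)$ is the right test, because that bounds every Pareto arm's gain on $\mathcal E_t$. Being below only the least-sampled Pareto arm would still let $i$ be the challenger when $m\ge2$.

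\textbf{The missing ingredient is a lower bound on $Z_{t,i}$.} This is what makes the exponent essentially linear in $N_i(t)$. Take $p_i\in\Astar$ attaining $\Delta_i^{\rm P}$. On $\mathcal E_t$, every empirical advantage of $p_i$ over $i$ is at least $\Delta_i^{\rm P}/2$. Any alternative reversing $i$'s membership must therefore move $\widetilde\mu_{t,i}$ and $\widetilde\mu_{t,p_i}$ by amounts $u,v$ in $\lVert\cdot\rVert_\infty$ with $u+v\ge\Delta_i^{\rm P}/2$. Minimizing the weighted squares gives
\[
Z_{t,i}\ge\frac{(\Delta_i^{\rm P})^2}{8\sigma^2}\,\frac{N_i(t)N_{p_i}(t)}{N_i(t)+N_{p_i}(t)},
\qquad
\frac{R_{t,i}\sqrt{1+Z_{t,i}}}{c_\tau}\ge\frac{(\Delta_i^{\rm P})^2}{4\sqrt2\,c_\tau\sigma^2}\,N_i(t)\sqrt{\frac{N_{p_i}(t)}{N_i(t)+N_{p_i}(t)}}.
\]
You then need a case split on $N_i(t)\le N_{p_i}(t)$ versus $N_i(t)>N_{p_i}(t)$. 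The second case uses the exploration half of $\mathcal E_t$, namely $N_{p_i}(t)\ge\alpha\log(t+1)$ with $\alpha\ge64\sigma^2/\eta_\mu^2$. Your description of good rounds (``empirical Pareto set correct and means accurate'') omits this condition. The two cases give the threshold
\[
\sigma^2\max\bigl\{8c_\tau/(\Delta_i^{\rm P})^2,\;c_\tau^2\eta_\mu^2/(\Delta_i^{\rm P})^4\bigr\}\log(t+1),
\]
which reduces to $c_\tau^2\sigma^2\log(t+1)/(\Delta_i^{\rm P})^2$ using $\Delta_i^{\rm P}\ge\eta_\mu$ and $c_\tau>8$. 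So this is not just a matter of constants lining up: the second branch genuinely needs $\eta_\mu\le\Delta_i^{\rm P}$.

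\textbf{Your fallback does not rescue the statement.} It gives
\[
\E_\mu R_T^{\rm P}\le\Delta_{\max}^{\rm P}\E_\mu B_T\le\Delta_{\max}^{\rm P}(K-m)\bigl(3+c_\tau^2\sigma^2\eta_\mu^{-2}\log(T+1)\bigr)+\Delta_{\max}^{\rm P}\sqrt{T_0}.
\]
Since $1/\Delta_i^{\rm P}\le1/\eta_\mu\le\Delta_{\max}^{\rm P}/\eta_\mu^2$ and $\Delta_i^{\rm P}\le\Delta_{\max}^{\rm P}$, this dominates the right-hand side of \eqref{eq:ungated-pareto-regret} term by term. It is a valid but weaker bound and does not prove the corollary as stated.
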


Consequently, on the positive side, the same policy achieves logarithmic expected
Pareto regret, consistent with existing work on Pareto regret
\citep{drugan2013designing, guan2026stochastic}, while additionally approaching a balanced allocation over
the Pareto set. In terms of rates, it achieves the best of both worlds for these two goals. Importantly, we are not in a position to make the same claim about the coefficients (and thus the full term of the bound) without a formal comparison. In fact, on the flip side, this advantage comes at a cost; the Pareto regret has an additional term involving $\sqrt{T_0}$ and a  potentially larger coefficient of the leading $\log T$ term through the temperature parameter $c_{\tau}$.

\subsection{Information-theoretic Lower Bounds}\label{lower_bound}

Whenever we propose a new metric, we are interested in establishing a lower bound on this metric for any reasonable policy. In other words, for any such policy and any problem instance, the expected fairness is at least a certain value. To do so, for a problem instance, we construct an alternative instance that are statistically difficult to distinguish, but on which similar sampling behavior can lead to substantially different fairness guarantees. We then show that the average expected fairness over these two instances must be at least a certain value, so that at least one of them has to attain this value.

In our context, this derivation has a dual purpose. On the one hand, it reveals the gap between the established upper bound and this unavoidable cost, thereby assessing the sharpness of the method. Under positive Pareto separation, we show that the method is rate-optimal, up to differences in the coefficients. On the other hand, without this condition, we can sense how far a method can go. In fact, we show that the absence of positive separation leads to a lower bound with a larger rate. As a by-product, this suggests that the condition is necessary for the rate. More deeply, it precisely manifests the sensitivity of the Pareto front, a genuinely hard aspect of multi-objective problems.

\subsubsection{Positive Pareto separation}

We first prove a lower bound on the fairness achievable
by any reasonable policy under positive Pareto separation.
The key observation is that a dominated arm may become
Pareto optimal under an alternative instance, and
distinguishing the two instances yields   exploration cost.
We characterize this cost through the KL divergence between
their reward distributions.

For an instance with mean vectors $\mu$, let $P_i$ denote
the reward distribution of arm $i$, and as before we denote $\E_\mu$
as the expectation under these distributions. We focus on policies that are consistent across instances with respect to fairness (as opposed to regret \citep{lai1985asymptotically, kaufmann2016complexity, lattimore2019bandit}), namely uniformly fair below. 
\begin{definition}
\label{def:efficiency}
A policy is uniformly fair if, for every
admissible collection of reward distributions with mean
vectors $\mu$ satisfying $\eta_\mu>0$ and every
$\zeta\in(0,1)$,
\[
\E_\mu\F_T=o(T^{-1+\zeta}).
\]
\end{definition}

Intuitively, it means that the policy should be sublinear in $T$ w.r.t. $T\E_\mu\F_T$, in analogue to sublinear regret \citep{kaufmann2016complexity, lattimore2019bandit}. For a dominated arm $i$, consider replacing its reward
distribution $P_i$ with an admissible distribution $Q$; admissibility means the same sub-Gaussian
assumption in $\mathcal M$.
Let $\nu$ denote the resulting mean vectors, so that
$\nu_i=\E_{X\sim Q}X$ and $\nu_a=\mu_a$ for $a\ne i$. 
Let us define
\[
\kappa_i^{\rm KL}(\mu)
=
\inf_{\substack{
Q:\,\eta_\nu>0\\
i\in\Astar(\nu)
}}
\KL(P_i\|Q),
\]
with $\inf\varnothing=+\infty$ and $1/\infty=0$.
This quantity measures, among all alternatives $\nu$ under which arm $i$ becomes Pareto optimal, the smallest KL divergence to $\nu$. It thus measures the statistical difficulty of determining the Pareto membership of arm $i$; smaller values indicate that the instances are harder to distinguish. Consequently, we are ready to derive the following instance-dependent lower bound.  

\begin{theorem}
\label{thm:lower-asymptotic}
For every instance with $\eta_\mu>0$ and every
uniformly fair policy,
\begin{equation}\label{eq:lower-asymptotic}
\liminf_{T\to\infty}
\frac{T\E_\mu\F_T}{\log T}
\ge
\sum_{i\notin\Astar(\mu)}
\frac{1}{\kappa_i^{\rm KL}(\mu)}.
\end{equation}
\end{theorem}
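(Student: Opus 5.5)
The argument follows the standard Lai--Robbins / Kaufmann et al.\ change-of-measure route, combined with the lower bound $\F_T\ge B_T/T$ from Lemma~\ref{lem:decomp}. Since $\F_T\ge B_T/T=\frac1T\sum_{i\notin\Astar}N_i(T)$, it suffices to show that for every dominated arm $i$,
\[
\liminf_{T\to\infty}\frac{\E_\mu N_i(T)}{\log T}\ge\frac{1}{\kappa_i^{\rm KL}(\mu)} .
\]
Summing over $i\notin\Astar(\mu)$ then gives the stated bound. If $\kappa_i^{\rm KL}(\mu)=+\infty$ the claim for that arm is trivial, so fix $i$ and an admissible $Q$ with $\eta_\nu>0$, $i\in\Astar(\nu)$ and $\KL(P_i\|Q)<\infty$. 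Here $\nu$ is the modified instance, which differs from $\mu$ only in arm $i$.

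\textbf{Step 1 (transportation inequality).} Applying the divergence decomposition and data-processing inequality to any $\mathcal F_T$-measurable $Z\in[0,1]$ gives
\[
\E_\mu[N_i(T)]\,\KL(P_i\|Q)\ \ge\ \kl\bigl(\E_\mu Z,\E_\nu Z\bigr).
\]
Only arm $i$'s law changes, so only its pulls contribute to the log-likelihood ratio.

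\textbf{Step 2 (choice of event and use of uniform fairness).} Take $Z=N_i(T)/T$. Under $\mu$, $\E_\mu Z\le\E_\mu B_T/T\le\E_\mu\F_T=o(T^{-1+\zeta})$. Under $\nu$, arm $i$ is Pareto optimal with $m_\nu=|\Astar(\nu)|\le K$ members. Restricting the definition of $\F_T$ to the single term for arm $i$ gives $\F_T^\nu\ge \frac{1}{m_\nu}\log\frac{T}{m_\nu N_i(T)}$ (the other terms may be negative, so more precisely one uses Lemma~\ref{lem:decomp}). The cleaner route is to take $Z=\ind\{N_i(T)\le T/(2K)\}$. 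On this event $\KL(U_{m_\nu}\|q_T)$ is bounded below by a constant $c_K>0$, since one coordinate of $q_T$ is far from $1/m_\nu$. Hence $\Prob_\nu(Z=1)\le \E_\nu\F_T/c_K=o(T^{-1+\zeta})$. Under $\mu$, Markov's inequality gives $\Prob_\mu(Z=0)\le 2K\,\E_\mu N_i(T)/T\le 2K\,\E_\mu\F_T=o(T^{-1+\zeta})$.

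\textbf{Step 3 (asymptotics).} Use $\kl(x,y)\ge x\log\frac1y-\log2$ with $x=\Prob_\mu(Z=1)\to1$ and $y=\Prob_\nu(Z=1)=o(T^{-1+\zeta})$. This gives
\[
\liminf_{T\to\infty}\frac{\E_\mu N_i(T)\,\KL(P_i\|Q)}{\log T}\ge 1-\zeta .
\]
Let $\zeta\downarrow0$ and take the infimum over admissible $Q$ to obtain $1/\kappa_i^{\rm KL}(\mu)$.

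\textbf{Main obstacle.} The key step is Step 2: extracting from uniform fairness on $\nu$ a polynomially small probability that the newly optimal arm $i$ is undersampled. This requires a quantitative lower bound on $\F_T^\nu$ on that event. Through the decomposition, $\F_T^\nu\ge\KL(U_{m_\nu}\|q_T)$, and a coordinate with $q_{T,i}\le 1/(2K)\cdot T/H_T$ forces a constant gap. This can be checked via Pinsker: the total variation is at least $\frac1{m_\nu}-\frac{T}{2KH_T}$, which is bounded away from $0$ once $H_T\ge T/2$. The complementary case $H_T<T/2$ gives $\F_T^\nu\ge\log 2$ from the first term of the decomposition. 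A second point to check is that $\nu$ must itself satisfy $\eta_\nu>0$ for uniform fairness to apply; this is built into the infimum defining $\kappa_i^{\rm KL}$.
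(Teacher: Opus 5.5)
Your proposal is correct and is essentially the paper's argument. Both reduce via $\F_T\ge B_T/T$ to a per-arm bound, apply the transportation inequality to an undersampling event whose probability is small under $\nu$ and whose complement is small under $\mu$ (both by uniform fairness), and finish with $\kl(x,y)\ge x\log(1/y)-\log 2$, $\zeta\downarrow0$, and an infimum over $Q$. The differences are minor: the paper's asymptotic proof uses the event $\{N_i(T)\le\sqrt T\}$ with $\F_T\ge\log T/(4K)$ from data processing onto $\{i\}$ versus $\{i\}^c$, whereas your threshold $T/(2K)$ is the one the paper uses in its finite-time proof. Your split at $H_T\ge T/2$ gives a positive gap $\frac1{m_\nu}-\frac1K$ only because $m_\nu=K$ forces $H_T=T$; the direct bound $\F_T\ge 2\bigl(\frac1{m_\nu}-\frac{N_i(T)}T\bigr)^2\ge\frac1{2K^2}$ avoids that case analysis.
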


The lower bound increases as $\kappa_i^{\rm KL}(\mu)$ decreases, reflecting the greater exploration required to distinguish the Pareto membership of arm $i$.
Whenever the right-hand side of
\eqref{eq:lower-asymptotic} is positive, the expected
fairness converge with a $\log T/T$ rate 
within this policy class. This lower bound matches
the rate of our upper bound, and we hope to inspire future work that improves the 
coefficients.

A careful reader may observe that the bound is asymptotic. Noting that the upper bound holds for a finite horizon ($\geq T_0$), it may be favorable to also prove a finite-time lower bound. For this
purpose,

\begin{theorem}[Finite-time fairness lower bound]
\label{thm:lower-finite}
Fix an instance $\mu$ with $\eta_\mu>0$ and a nonempty
subset $\mathcal I\subseteq[K]\setminus\Astar(\mu)$.
For each $i\in\mathcal I$, we choose an admissible reward distribution $Q_i$ and let $\nu^{(i)}$ be the instance obtained by replacing only 
$P_i$ with $Q_i$, s.t. $\eta_{\nu^{(i)}} >0$, $i \in \mathcal A^{\star}$, and $0 < \KL(P_i\|Q_i)< \infty$. 
Then, for any policy and every
$T\ge\max\{K,4eK(K+1)S\}$ where \(
S=\sum_{i\in\mathcal I}\nicefrac1{\KL(P_i\|Q_i)}
\),
\begin{equation}\label{eq:lower-fair-finite}
\max\left\{
\E_\mu\F_T,\,
\max_{i\in\mathcal I}\E_{\nu^{(i)}}\F_T
\right\}
\ge
\frac{S}{T}
\left(
\log\frac{T}{4K(K+1)S}
-
\log\log\frac{T}{4K(K+1)S}
\right).
\end{equation}
\end{theorem}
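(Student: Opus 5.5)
We use the classical change-of-measure argument of \citet{kaufmann2016complexity} (the ``fundamental inequality''), together with the facts that under $\mu$ the fairness is at least $\E_\mu B_T/T \ge \sum_{i\in\mathcal I}\E_\mu N_i(T)/T$ (Lemma~\ref{lem:decomp}), while under $\nu^{(i)}$ arm $i$ is Pareto optimal and an allocation that starves it is expensive. Fix $i\in\mathcal I$, write $n_i=\E_\mu N_i(T)$ and $\delta_i=\KL(P_i\|Q_i)$. Since the two instances differ only in arm $i$, the divergence decomposition gives $\KL(\mathbb P^\pi_\mu\|\mathbb P^\pi_{\nu^{(i)}})=n_i\delta_i$. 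The data-processing inequality applied to the event $E_i=\{N_i(T)\le T/(2K(K+1))\}$ (or a similar threshold) then yields
\[
n_i\delta_i\ge \kl\bigl(\Prob_\mu(E_i),\Prob_{\nu^{(i)}}(E_i)\bigr)\ge \Prob_\mu(E_i)\log\frac{1}{\Prob_{\nu^{(i)}}(E_i)}-\log 2 .
\]
We will use the cleaner Bretagnolle--Huber form $\Prob_\mu(E_i^c)+\Prob_{\nu^{(i)}}(E_i)\ge\tfrac12 e^{-n_i\delta_i}$.

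Next we lower bound the two fairness values on the two sides of this inequality. On $E_i^c$ under $\mu$, the dominated arm $i$ has at least $T/(2K(K+1))$ pulls, so $\F_T\ge B_T/T\ge 1/(2K(K+1))$, and Markov's inequality gives $\Prob_\mu(E_i^c)\le 2K(K+1)\E_\mu\F_T$. Under $\nu^{(i)}$, arm $i\in\Astar(\nu^{(i)})$ with $m'\le K$. On $E_i$ the term $\frac1{m'}\log\frac{T}{m'N_i(T)}\ge\frac1K\log(2(K+1))$ appears, while every other term of \eqref{eq:fairness} is nonnegative; this last fact requires a short argument through Lemma~\ref{lem:decomp}, since individual summands can be negative, though their total is $\KL\ge 0$. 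The cleaner route is to note that $N_i/T\le 1/(2K(K+1))$ forces $\KL(\pi^\star\|\widehat\pi_T)\ge$ a constant $c_K\gtrsim 1/(K(K+1))$, for instance via Pinsker and total variation $\ge 1/m'-N_i/T$. Then $\Prob_{\nu^{(i)}}(E_i)\le \E_{\nu^{(i)}}\F_T/c_K$. With $M$ the left side of \eqref{eq:lower-fair-finite}, we get $4K(K+1)M\ge \tfrac12 e^{-n_i\delta_i}$ after adjusting constants, that is, $n_i\ge \delta_i^{-1}\log\frac{1}{8K(K+1)M}$.

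Summing over $i\in\mathcal I$ and using $M\ge\E_\mu\F_T\ge\sum_i n_i/T$ gives the self-referential inequality
\[
TM\ge S\log\frac{1}{cK(K+1)M}.
\]
Write $x=TM/S$ and $L=\log\frac{T}{4K(K+1)S}$, so that $x\ge L-\log x$ with constants absorbed into the $4K(K+1)$. If $x< L-\log L$, then $\log x<\log L$ (for $L\ge1$, guaranteed by $T\ge 4eK(K+1)S$), hence $x\ge L-\log x>L-\log L$, a contradiction. Therefore $x\ge L-\log L$, which is exactly \eqref{eq:lower-fair-finite}. The condition $T\ge K$ is used only to make the initialization and the threshold meaningful.

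The main obstacle is the constant bookkeeping in the second step. The lower bound on $\F_T$ under $\nu^{(i)}$ on the starvation event, and the Markov bound under $\mu$, must combine with Bretagnolle--Huber so that exactly the factor $4K(K+1)$ (and no stray $\log 2$) survives inside the logarithm. We would first try to choose the threshold in $E_i$ so that both Markov constants equal $2K(K+1)$. If Pinsker is too lossy for that, we would instead bound the $\nu^{(i)}$-fairness directly through the decomposition in Lemma~\ref{lem:decomp}, using $\KL(U_{m'}\|q_T)\ge\frac1{m'}\log\frac{H_T}{m'N_i}+\frac{m'-1}{m'}\log\frac{m'-1}{m'}$-type convexity bounds.
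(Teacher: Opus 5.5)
\textbf{The gap is the constant.} Your architecture is the paper's: Bretagnolle--Huber on an event $E_i$ defined through $N_i(T)$, Markov under $\mu$ via $\F_T\ge N_i(T)/T$, data processing plus Pinsker under $\nu^{(i)}$, summation over $\mathcal I$, and inversion of $xe^x\ge z$. Data processing, i.e.\ $\F_T\ge\kl(1/m',N_i(T)/T)\ge 2(1/m'-N_i(T)/T)^2$, is the right way around the negative-summand issue. Your contradiction argument in the last step is equivalent to the paper's monotonicity argument.

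You leave the constant to ``adjusting'' and ``absorbing'', but the statement is explicit. The hypothesis $T\ge 4eK(K+1)S$ is exactly what makes $L=\log\frac{T}{4K(K+1)S}\ge 1$. Any larger constant inside the logarithm therefore proves a strictly weaker inequality, and it cannot be absorbed.

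Your threshold $E_i=\{N_i(T)\le T/(2K(K+1))\}$ fails as follows.
\begin{itemize}
\item The $\mu$-side Markov bound alone already costs $2K(K+1)M$.
\item On the $\nu^{(i)}$ side, $1/m'-N_i(T)/T\ge (2K+1)/(2K(K+1))$, so $\Prob_{\nu^{(i)}}(E_i)\le \frac{2K^2(K+1)^2}{(2K+1)^2}M$.
\item The sum exceeds $2K(K+1)M$, so Bretagnolle--Huber leaves a constant strictly larger than $4K(K+1)$ in the logarithm.
\end{itemize}
Your fallback of making both Markov constants equal to $2K(K+1)$ would give $8K(K+1)$, as your own intermediate line $n_i\ge\delta_i^{-1}\log\frac{1}{8K(K+1)M}$ already shows.

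\textbf{The fix.} Take $E_i=\{N_i(T)\le T/(2K)\}$, which splits the budget as $2K+2K^2=2K(K+1)$.
\begin{itemize}
\item Markov under $\mu$ gives $\Prob_\mu(E_i^c)\le 2K\,\E_\mu N_i(T)/T\le 2KM$.
\item On $E_i$, $1/m'-N_i(T)/T\ge 1/K-1/(2K)=1/(2K)$, so Pinsker gives $\F_T\ge 1/(2K^2)$ and hence $\Prob_{\nu^{(i)}}(E_i)\le 2K^2M$.
\end{itemize}
Then $2K(K+1)M\ge\frac12 e^{-n_i\delta_i}$, i.e.\ $n_i\ge\delta_i^{-1}\log\frac{1}{4K(K+1)M}$, and your remaining steps go through verbatim. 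Also state explicitly that $M>0$, because the Bretagnolle--Huber right-hand side is positive; this is what makes $\log x$ well defined in your final step.
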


In particular, the worst-case expected unfairness over
these instances is at least
$(1-o(1))S\log T/T$, uniformly over policies. The right-hand side is asymptotic to
\(
\nicefrac{\log T}{T}
\sum_{i\in\mathcal I}\frac{1}{\KL(P_i\|Q_i)}
\); when $\mathcal I^{c} = \mathcal{A}^{\star}(\mu)$, this is in spirit with the asymptotic bound above. Intuitively, statistically closer instances may require longer
horizons before the asymptotic lower bound becomes informative. The constants can be consistent with Theorem~\ref{thm:ungated}, thus supporting the rate optimality of the method for sufficiently large $T$ whenever $\kappa_{i}^{\text{KL}} < \infty$. Taking a closer look at the coefficient, we observe that
\[
\sum_{i\in\mathcal I}
\frac{1}{
\KL(P_i\|Q_i)
}
\leq
\sum_{i\notin\Astar(\mu)}
\frac{1}{\kappa_i^{\rm KL}(\mu)}.
\]
Thus, the finite-time bound is more conservative, as a price we pay for a nonasymptotic guarantee. In a like manner, this conservatism appears in finite-time lower
bounds on regret for classical bandits \citep{lattimore2019bandit}.

\subsubsection{Zero Pareto separation}
\label{sec:boundary}

When the separation is zero, it becomes a different story. Thereon, an arbitrarily small change
in the mean rewards may change the Pareto set and,
consequently, the ideal distribution. We show that this
instability prevents any reasonable policy from achieving the
aforementioned rate both at such a boundary instance.  

To see this, consider two arms with two objectives and
independent Gaussian noise $\mathcal N(0,\sigma^2 I_2)$,
where $\sigma>0$ and $g\in(0,1)$ are fixed:
\begin{equation}\label{eq:boundary-instances}
\begin{aligned}
\nu_0:&\quad
\mu_1=(0,0),\qquad \mu_2=(g,0),\\
\nu_\varepsilon:&\quad
\mu_1=(0,\varepsilon),\qquad \mu_2=(g,0),
\quad 0<\varepsilon<1.
\end{aligned}
\end{equation}
Under $\nu_0$, arm 2 dominates arm 1, but
$\eta_{\nu_0}=0$ because their second coordinates are equal.
Under $\nu_\varepsilon$, both arms are Pareto optimal, with
$\eta_{\nu_\varepsilon}=\min\{g,\varepsilon\}>0$.
Thus, the target distribution across arms changes from $(0,1)$ to
$(1/2,1/2)$, regardless of how small $\varepsilon$ is. That is, sampling arm 1 is necessary for fairness
under $\nu_\varepsilon$, but accumulates unfairness under $\nu_0$.
This dilemma diverts a policy from achieving the same fairness rate
at $\nu_0$ and $\nu_\varepsilon$.
The formal statement is as follows.

\begin{theorem}
\label{thm:boundary-superlog}
For any policy that is uniformly fair under positive Pareto separation,
the instances in \eqref{eq:boundary-instances} satisfy
\begin{equation}\label{eq:boundary-superlog}
\frac{\E_{\nu_0}N_1(T)}{\log T}\longrightarrow\infty,
\qquad
\frac{T\E_{\nu_0}\F_T}{\log T}\longrightarrow\infty.
\end{equation}
\end{theorem}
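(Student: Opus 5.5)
The plan is a change-of-measure argument between $\nu_0$ and $\nu_\varepsilon$, with $\varepsilon$ sent to zero only after $T\to\infty$. Only arm 1's distribution differs between the two instances. Gaussian noise gives $\KL(P_1^{\nu_0}\|P_1^{\nu_\varepsilon})=\varepsilon^2/(2\sigma^2)$. The divergence decomposition then gives
\[
\KL(\mathbb P_{\nu_0}^\pi\|\mathbb P_{\nu_\varepsilon}^\pi)=\frac{\varepsilon^2}{2\sigma^2}\E_{\nu_0}N_1(T).
\]
The goal is to show that for every fixed $\varepsilon\in(0,\min\{g,1\})$,
\[
\liminf_{T\to\infty}\frac{\E_{\nu_0}N_1(T)}{\log T}\ge \frac{2\sigma^2}{\varepsilon^2}.
\]
Letting $\varepsilon\downarrow0$ then yields the first divergence in \eqref{eq:boundary-superlog}. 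This mirrors the proof of Theorem~\ref{thm:lower-asymptotic}: we treat arm 1 as a dominated arm whose alternative $\nu_\varepsilon$ makes it Pareto optimal. The difference is that here the alternative KL cost $\kappa$ can be made arbitrarily small, since $\eta_{\nu_0}=0$.

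The key steps, in order, are as follows.

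\textbf{Step 1 (use fairness on $\nu_\varepsilon$).} Since $\eta_{\nu_\varepsilon}>0$, uniform fairness gives $\E_{\nu_\varepsilon}\F_T=o(T^{-1+\zeta})$ for every $\zeta$. On the event $E=\{N_1(T)\le T/4\}$, Lemma~\ref{lem:decomp} gives $\F_T\ge \tfrac12\log\frac{T}{2N_1(T)}\ge \tfrac12\log 2$. Hence
\[
\Prob_{\nu_\varepsilon}(E)\le \frac{2\E_{\nu_\varepsilon}\F_T}{\log 2}=o(T^{-1+\zeta}).
\]

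\textbf{Step 2 (control $E^c$ under $\nu_0$).} This step does not use the uniform fairness hypothesis at $\nu_0$, since $\eta_{\nu_0}=0$ and the definition does not cover it. Instead, if $\E_{\nu_0}N_1(T)\ge T/8$ infinitely often, the first claim is trivial along those $T$. Otherwise Markov's inequality gives $\Prob_{\nu_0}(E^c)\le 4\E_{\nu_0}N_1(T)/T$, which is bounded by $1/2$. This $1/2$ bound alone suffices for the Bretagnolle--Huber step below.

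\textbf{Step 3 (Bretagnolle--Huber).} Apply
\[
\Prob_{\nu_0}(E^c)+\Prob_{\nu_\varepsilon}(E)\ge \tfrac12\exp\!\left(-\frac{\varepsilon^2}{2\sigma^2}\E_{\nu_0}N_1(T)\right).
\]
To get the sharp $\log T$ dependence, I would use the refined form of this inequality, $\KL\ge \kl(\Prob_{\nu_0}(E),\Prob_{\nu_\varepsilon}(E))$, as in \citet{kaufmann2016complexity}. Combined with Steps 1 and 2, it gives
\[
\frac{\varepsilon^2}{2\sigma^2}\E_{\nu_0}N_1(T)\ge (1-\zeta)\log T-\log 2\cdot(1+o(1)).
\]
Here the crude bound $\Prob_{\nu_0}(E)\ge 1/2$ suffices, because $\kl(p,q)\ge p\log(1/q)-\log 2$. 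Divide by $\log T$, let $T\to\infty$, then $\zeta\to0$, then $\varepsilon\to0$.

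\textbf{Step 4 (transfer to unfairness).} Under $\nu_0$, $\Astar=\{2\}$ and $m=1$, so $B_T=N_1(T)$. Lemma~\ref{lem:decomp} gives $\F_T\ge B_T/T=N_1(T)/T$. Therefore $T\E_{\nu_0}\F_T\ge \E_{\nu_0}N_1(T)$, and the second divergence follows from the first.

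The main obstacle is the order of limits in Step 3. Uniform fairness holds at each $\nu_\varepsilon$ separately, and the $o(T^{-1+\zeta})$ rate need not be uniform in $\varepsilon$. The argument must therefore fix $\varepsilon$, take the $\liminf$ in $T$ to obtain the bound $2\sigma^2/\varepsilon^2$ for that $\varepsilon$, and only then take the supremum over $\varepsilon$. Since $\liminf_T \E_{\nu_0}N_1(T)/\log T$ does not depend on $\varepsilon$, it must be infinite.

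A minor point is admissibility of $\nu_\varepsilon$ under the boundedness of $\mathcal M$. This holds if $\mathcal M$ contains $[0,1]^2$. The Gaussian noise is $\sigma^2$-sub-Gaussian, as the theorem's instances already presume.
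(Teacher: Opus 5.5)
Your proof is correct and follows the paper's skeleton. The shared steps are: a change of measure between $\nu_0$ and $\nu_\varepsilon$ with $\KL(\Prob_{\nu_0}^T\|\Prob_{\nu_\varepsilon}^T)=\varepsilon^2\E_{\nu_0}N_1(T)/(2\sigma^2)$, uniform fairness invoked only at $\nu_\varepsilon$, Markov's inequality under $\nu_0$, and the transfer $\F_T\ge N_1(T)/T$. The difference is in the information inequality and the logic.

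The paper uses the plain Bretagnolle--Huber inequality and argues by contradiction. If $\E_{\nu_0}N_1(T_k)\le M\log T_k$, then $\Prob_{\nu_0}(N_1(T_k)\ge T_k/4)\le 4M\log T_k/T_k$. Choosing $\varepsilon$ depending on $M$ with $a=\varepsilon^2M/(2\sigma^2)<1$, the Bretagnolle--Huber right-hand side is at least $\tfrac12T_k^{-a}$. This dominates the left-hand side $O(\log T_k/T_k)+o(T_k^{-1+\zeta})$ once $\zeta<1-a$.

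You instead use the data-processing form $\kl(p,q)\ge p\log(1/q)-\log2$. This only needs $p=\Prob_{\nu_0}(E)$ bounded away from zero, which your $T/8$ case split supplies. It gives, for each fixed $\varepsilon$, an explicit bound $\liminf_T\E_{\nu_0}N_1(T)/\log T\gtrsim\sigma^2/\varepsilon^2$. So you exhibit the result as the mechanism of Theorem~\ref{thm:lower-asymptotic} with an alternative of KL cost $\varepsilon^2/(2\sigma^2)\to0$. That is a rate-type statement, whereas the paper obtains only the qualitative divergence.

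Note that plain Bretagnolle--Huber with only $\Prob_{\nu_0}(E^c)\le1/2$ would be vacuous: it reads $\tfrac12+o(1)\ge\tfrac12e^{-\varepsilon^2\E_{\nu_0}N_1(T)/(2\sigma^2)}$. Your closing remark in Step~2 therefore holds only for the kl form you actually use in Step~3.

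There are two small slips, neither of which affects the conclusion after $\varepsilon\downarrow0$.

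\textbf{Step~1.} The inequality $\F_T\ge\frac12\log\frac{T}{2N_1(T)}$ is false. Under $\nu_\varepsilon$, $\F_T=\frac12\log\frac{T}{2N_1(T)}+\frac12\log\frac{T}{2N_2(T)}$, and the dropped term is negative on $E$. At $N_1(T)=T/4$ the true value is $\frac12\log\frac43<\frac12\log2$. Use the exact expression $\F_T=\frac12\log\frac{T^2}{4N_1(T)(T-N_1(T))}\ge\frac12\log\frac43$ on $E$, as the paper does; any positive constant suffices.

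\textbf{Step~3.} With only $p\ge1/2$, the kl bound yields $\frac{\varepsilon^2}{2\sigma^2}\E_{\nu_0}N_1(T)\ge\frac12(1-\zeta)\log T-\log2$. So the per-$\varepsilon$ constant you have justified is $\sigma^2/\varepsilon^2$, not $2\sigma^2/\varepsilon^2$. The sharper constant does hold: along a subsequence attaining a finite $\liminf$, $\E_{\nu_0}N_1(T)=o(T)$ forces $p\to1$.
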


We observe a subtle difficulty in fairness learning at the boundary of Pareto optimality.  An arbitrarily small
change in the mean vectors ($\epsilon$) can alter the Pareto set. Statistically distinguishing these
instances requires increasingly many observations as the separation $\eta_{\nu_\varepsilon}$ 
approaches zero, while failing to do so can lead to unfair allocations of rate $\omega(\nicefrac{\log{T}}{T})$. 

\section{Variance Reduction}\label{sec:safeguard}

To quantify the relative variability of fairness across trajectories, let us adopt the signal-to-noise ratio (SNR), commonly defined as
\[
\mathrm{SNR}_T
=
\nicefrac{\mathbb E[\mathcal F_T]}
{\sqrt{\operatorname{Var}(\mathcal F_T)}}.
\]
The aforementioned upper and lower bounds imply that
$\mathbb E[\mathcal F_T]=\Theta(\log T/T)$ and
$\operatorname{Var}(\mathcal F_T)=O(1/T)$. The standard deviation decreases more slowly
than the expected fairness. Consequently, we obtain
$\mathrm{SNR}_T=\Omega(\log T/\sqrt T)$.
This has direct implications for empirical evaluation.
Precisely, given $R$ independent runs (i.e., trajectories),
we compute the sample mean of fairness $\overline{\mathcal F}_{T,R}$.
Then its relative standard error is
\[
\nicefrac{\sqrt{\operatorname{Var}(\overline{\mathcal F}_{T,R})}}
{\mathbb E[\mathcal F_T]}
=
\nicefrac{1}{\sqrt R\,\mathrm{SNR}_T}
=
O\left(\nicefrac{1}{\log T}\sqrt{\nicefrac{T}{R}}\right).
\]
This means that the evaluation is computationally expensive and thus less desirable, as maintaining a fixed relative estimation precision
may require $R=\Omega(T/\log^2 T)$ in the worst-case scenario where
$\operatorname{Var}(\mathcal F_T)=\Theta(1/T)$.
Even when expected fairness vanishes at the rate
$\Theta(\log T/T)$, reliable estimation of this expectation by
the sample mean $\overline{\mathcal F}_{T,R}$
may require a number of runs that grows nearly linearly
with the horizon. To this end, we hope to propose a new method
for variance reduction, so that we can increase
$\mathrm{SNR}_T$ and decrease $R$.

\subsection{Methodology}

The original top-two rule selects arms according to their
empirical gains in \eqref{eq:gain}, which are in the direction of reducing empirical unfairness. A true Pareto optimal arm whose mean is
underestimated may receive few further observations,
delaying the correction of its estimate and causing persistent
underallocation. Indeed, such trajectories likely exhibit extreme unfairness and thus can contribute substantially
to the variance of fairness. To fix this, we introduce two changes. We first promote extra exploration based
directly on arm counts in the early stages, and add a confidence test for terminating this exploration timely. 
The complete procedure is summarized in
Algorithm~\ref{alg:egatt}. 

For a horizon $T>2K$, we set $q_0=96$ and $p_g=0.70$,
and define the exploration quota
\begin{align}\label{eq:quota}
\rho_T
&=[
\sqrt T-q_0\log(1+\nicefrac{T}{2K})
]_+,\\
h_{t,T}
&=2+q_0\log(1+\nicefrac{t}{2K})
+\rho_T\min\{1,\nicefrac{t}{\lceil\log T\rceil}\}.
\notag
\end{align}
The logarithmic term in $h_{t,T}$ provides continuing
exploration, while $\rho_T \sim\sqrt T$ allows additional exploration
when the separation is too small for this term alone
to discover Pareto membership.  The factor multiplying $\rho_T$
raises the quota early in the horizon, ideally before
imbalances accumulate. 

While the exploration is active, let
$j_t\in\arg\min_iN_i(t)$ be a least-sampled arm. If $N_{j_t}(t)<h_{t,T}$,
we select $j_t$ with probability $p_g$; with 
probability $1- p_g$, we select the leader $b_t$ or the challenger
$c_t$ if $j_t$ is already the leader $b_t$. Otherwise, we use the original top-two rule. The main difference is that an arm receives this protection
regardless of its gain or estimated Pareto membership.

While this directly addresses the prolonged underallocation, we need to stop it  once sufficient statistical evidence is available, to prevent imbalance. The key is to use the empirical separation
relative to the uncertainty in the estimates. 
Let
\(
\Lambda_T=\log(4KdT^7), 
n_{\min}(t)=\min_iN_i(t),
\)
and we define the empirical separation and uncertainty as 
\begin{align}\label{eq:gate}
\widehat\eta_t^{\rm P}
&=\min_{i\in\widehat\Astar_t}\min_{j\ne i}
\max_{r\in[d]}
(\widetilde\mu_{t,i}^{(r)}-\widetilde\mu_{t,j}^{(r)}),\\
\widehat\eta_t^{\rm D}
&=\min_{i\notin\widehat\Astar_t}\max_{j\ne i}
\min_{r\in[d]}
(\widetilde\mu_{t,j}^{(r)}-\widetilde\mu_{t,i}^{(r)}),
\notag\\
\widehat\eta_t
&=\min\{\widehat\eta_t^{\rm P},\widehat\eta_t^{\rm D}\},
\qquad
r_t^{\max}
=\sqrt{\nicefrac{2\sigma^2\Lambda_T}{n_{\min}(t)}}.
\notag
\end{align}
Here, $\widehat\eta_t^{\rm D}=+\infty$ when all arms
are empirically Pareto optimal. These are the empirical
counterparts of the separation margins,
and $r_t^{\max}$ bounds the uncertainty by leveraging the smallest
arm count. The choice of $\Lambda_T$ ensures that the 
confidence bounds hold simultaneously through coordinates and time $T$
with probability at least $1-1/(2T^6)$.

We permanently disable the exploration after 
\(
\widehat\eta_t\ge8r_t^{\max}.
\)
The empirical comparisons are then sufficiently separated
relative to their uncertainty to assign Pareto membership
on the confidence event. Thereafter, we follow the
original top-two rule to promote balanced allocation as before.

\begin{algorithm}[t]
\caption{Evidence-gated adaptive top-two sampling (EG-ATT)}
\label{alg:egatt}
\begin{algorithmic}[1]
\Require Horizon $T>2K$, proxy $\sigma$,
$q_0=96$, $p_g=0.70$, $c_\tau=1$
\State Pull each arm twice;
set $\mathsf{resolved}\gets\mathrm{false}$
\State Compute $\rho_T$ from \eqref{eq:quota}
and set $\Lambda_T=\log(4KdT^7)$
\For{$t=2K+1,\ldots,T$}
    \State Compute $\widetilde\mu_t$, $\widehat\Astar_t$,
    and the gains $G_{t,i}$ from \eqref{eq:gain}
    \State Identify the two distinct arms $b_t,c_t$
    with the largest gains, using fixed tie-breaking
    \State Compute $p_t$ from \eqref{eq:beta},
    $h_{t,T}$ from \eqref{eq:quota}, and
    $n_{\min}(t)=\min_iN_i(t)$
    \If{$\mathsf{resolved}=\mathrm{false}$}
        \State Compute $\widehat\eta_t$ and $r_t^{\max}$
        from \eqref{eq:gate}
        \If{$\widehat\eta_t\ge8r_t^{\max}$}
            \State $\mathsf{resolved}\gets\mathrm{true}$
        \EndIf
    \EndIf
    \If{$\mathsf{resolved}=\mathrm{false}$
    \textbf{and} $n_{\min}(t)<h_{t,T}$}
        \State Let $j_t\in\arg\min_iN_i(t)$,
        breaking ties deterministically
        \State $k_t\gets b_t$ if $j_t\ne b_t$,
        else $k_t\gets c_t$
        \State Draw $A_{t}=j_t$ with probability $p_g$,
        else $A_{t}=k_t$
    \Else
        \State Draw $A_{t}=b_t$ with probability $p_t$,
        else $A_{t}=c_t$
    \EndIf
    \State Pull $A_{t}$ and update the counts
    and empirical means
\EndFor
\end{algorithmic}
\end{algorithm}

\subsection{Theoretical Guarantee}
We show that the new mechanism controls the
number of pulls of dominated arms, while the adaptive
top-two rule controls the imbalance among Pareto optimal
arms after release. These properties yield
bounds on the expected fairness, which should remain of
the same order as before since we use the same imbalance
control, and on its variance, which decreases due to
the additional control.

\begin{theorem}
Suppose $\eta_\mu>0$ and consider Algorithm~\ref{alg:egatt}.
Define the constant
\(
C_\mu
=
2K+5+\nicefrac2{p_g}
[K(1+\nicefrac{512\sigma^2}{\eta_\mu^2})+8].
\)
For every horizon $T>2K$ satisfying \(\rho_T\ge
\max\{2,\lceil
\nicefrac{512\sigma^2\Lambda_T}{\eta_\mu^2}
\rceil\}\) and \(
T\ge(1+8m)C_\mu\Lambda_T,\) 
there exists a constant $D>0$, depending only on $|A^{\star}|$,
such that
\begin{align}
\E_\mu\F_T \le
\frac{2C_\mu\Lambda_T}{T}
+\frac{D}{T^2}, \quad 
\Var_\mu(\F_T)\le
\frac{8C_\mu^2\Lambda_T^2}{T^2}
+\frac{D\log^2 T}{T^4}.
\end{align}
Moreover, we can bound the moments of the arm pulls and imbalance potential as  
\(
\E_\mu B_T^4
\le C_\mu^4\Lambda_T^4+T^{-2}\) and \(
\E_\mu \Phi_T^2\le D+T^{-2}. 
\)
\end{theorem}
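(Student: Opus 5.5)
The plan is to reuse the architecture of the proof of Theorem~\ref{thm:ungated}: bound the moments of $B_T$ and $\Phi_T$, show that every Pareto optimal arm receives a linear number of pulls outside a tiny event, and then apply the decomposition of Lemma~\ref{lem:decomp}.

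\textbf{Step 1: confidence event.} Let $\mathcal E$ be the event that $\lVert\widehat\mu_{t,i}-\mu_i\rVert_\infty\le\sqrt{2\sigma^2\Lambda_T/N_i(t)}$ for all $i$, $t\le T$ and coordinates. A union bound over $K d T$ indices and at most $T$ count values gives $\Prob(\mathcal E^c)\le 1/(2T^6)$. This is exactly what $\Lambda_T=\log(4KdT^7)$ was tuned for, and projection onto $\mathcal M$ only shrinks errors.

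\textbf{Step 2: control on $\mathcal E$.} Two deterministic consequences hold on $\mathcal E$.
\begin{itemize}
\item The gate is correct. If $\widehat\eta_t\ge 8r_t^{\max}$, then every empirical comparison exceeds twice the per-arm error. Hence $\widehat\Astar_t=\Astar$, and the gate fires no later than the first time $n_{\min}(t)\ge 512\sigma^2\Lambda_T/\eta_\mu^2$. At that point $\widehat\eta_t\ge\eta_\mu-2r_t^{\max}\ge 8r_t^{\max}$.
\item The forced exploration reaches this level. The assumption $\rho_T\ge\lceil 512\sigma^2\Lambda_T/\eta_\mu^2\rceil$ guarantees that $h_{t,T}$ exceeds this threshold once $t\ge\lceil\log T\rceil$.
\end{itemize}
While unresolved and $n_{\min}<h_{t,T}$, the least-sampled arm is pulled with probability $p_g$. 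A geometric-trials (Freedman-type) argument then shows that raising all $K$ counts to the threshold takes at most $\frac{2}{p_g}[K(1+512\sigma^2/\eta_\mu^2)+8]\Lambda_T$ rounds, with fourth-moment control. Adding the $2K$ initialization pulls and a few slack rounds gives the resolution time $\tau\le C_\mu\Lambda_T$.

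After resolution on $\mathcal E$, the argument of Theorem~\ref{thm:ungated-moments} applies, in fact more cleanly. Every count is already at least $512\sigma^2\Lambda_T/\eta_\mu^2$, so with $c_\tau=1$ the dominated arms have $R_{t,i}\sqrt{1+Z_{t,i}}$ large. Their gains are then $e^{-\Omega(\Lambda_T)}$, far below those of the Pareto arms, so they are essentially never among the top two. This yields $B_T\le C_\mu\Lambda_T$ on $\mathcal E$. On $\mathcal E^c$ we use $B_T\le T$, so $\E B_T^4\le C_\mu^4\Lambda_T^4+T^4\cdot T^{-6}/2$.

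\textbf{Step 3: balance among Pareto arms.} After resolution, the gains of the Pareto arms are $\log(1+1/N_i)$, and $p_t$ from \eqref{eq:beta} favours the smaller count. I will sharpen the drift argument from Theorem~\ref{thm:ungated-moments} into a genuine negative drift: when $\Phi_t$ is large, selecting the least-sampled arm with probability at least $0.7$ decreases $\Phi$ by an amount proportional to the spread. This should give a Lyapunov bound $\E[\Phi_{t+1}-\Phi_t\mid\mathcal H_t]\le 1-c\,\Phi_t/(m^2 \max_i N_i)$. The imbalance inherited at time $\tau$ is at most $O(C_\mu^2\Lambda_T^2)$, and the counts are of order $t/m$, so the drift contracts it. A Hajek-type argument then gives $\E\Phi_T^2\le D$ with $D$ depending on $m$, plus $T^{-2}$ from $\mathcal E^c$.

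\textbf{The main obstacle} is this $T$-independent second-moment bound on $\Phi_T$, in contrast with the linear bound in Theorem~\ref{thm:ungated-moments}. The difficulty is to show that the top-two randomization keeps the counts within $O(1)$ of each other uniformly in $T$. The first thing I would try is the quadratic Lyapunov drift on the sorted count gaps, using that $p_t\to 0.95$ whenever the gap is large relative to $1+0.15(N_b+N_c)$.

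\textbf{Step 4: conversion to fairness.} On $\mathcal E$ and for $T\ge(1+8m)C_\mu\Lambda_T$, every Pareto arm has $N_i(T)\ge (T-B_T)/m-\sqrt{\Phi_T}\ge T/(2m)$ on a high-probability event. On that event Lemma~\ref{lem:decomp} together with $-\log(1-z)\le 2z$ for $z\le 1/2$ gives
\[
\F_T\le\frac{2B_T}{T}+\frac{2m\Phi_T}{T^2}.
\]
Off that event we use $\F_T\le\log(T/m)$, and the event has probability $O(T^{-6})$. Taking expectations yields
\[
\E\F_T\le\frac{2C_\mu\Lambda_T}{T}+\frac{D}{T^2}.
\]
For the variance, $\Var\F_T\le\E\F_T^2\le \frac{8\E B_T^2}{T^2}+\frac{8m^2\E\Phi_T^2}{T^4}+T^{-6}\log^2 T$. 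Substituting the moment bounds and enlarging $D$ completes the proof.
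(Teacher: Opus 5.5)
\textbf{Gap in Step~3.} Step~3 has a genuine gap, exactly at the point you flag as the main obstacle: the drift inequality you aim for is too weak to give a horizon-free bound. With $\max_iN_i(t)\asymp t/m$, your bound $\E[\Phi_{t+1}-\Phi_t\mid\mathcal H_t]\le 1-c\,\Phi_t/(m^2\max_iN_i(t))$ reads $\E[\Phi_{t+1}\mid\mathcal H_t]\le(1-c'/t)\Phi_t+1$. Its fixed point is $\Phi\asymp t/c'$, so iterating it gives only $\E_\mu\Phi_T=O(T)$, the same order as Theorem~\ref{thm:ungated-moments}. A Hajek-type argument cannot be run either, because the region of negative drift, $\{\Phi_t>t/c'\}$, grows with $t$. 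So the plan cannot deliver $\E_\mu\Phi_T^2\le D+T^{-2}$, nor the $D/T^2$ and $D\log^2T/T^4$ terms. Your verbal description is right: pulling the least-sampled arm reduces $\Phi$ by an amount proportional to the spread. But the inequality you wrote, and the mechanism you propose for it (the count-normalized $\tanh$ argument in $p_t$), produce a decrease of order $\Phi_t/t$ instead of order $\sqrt{\Phi_t}$.

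\textbf{The missing idea.} Work with $D_t=\sqrt{\Phi_t}$ and use only the floor $p_t\ge0.70>1/m$; the adaptive part of $p_t$ plays no role. After release, the two candidates are the two least-sampled Pareto arms, and the leader carries the minimal centered count $x_1$. Ordering $x_1\le\dots\le x_m$ with $\sum_ix_i=0$ gives $x_2\le-x_1/(m-1)$ and $D_t\le\sqrt{m(m-1)}\,|x_1|$. Hence $\E[x_{A_{t+1}}\mid\mathcal H_t]\le\frac{0.7m-1}{m-1}x_1\le-D_t/(2m)$, and so $\E[D_{t+1}^2-D_t^2\mid\mathcal H_t]\le-D_t/m+1-1/m$. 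For $D_t\ge2(m-1)$, concavity then gives $\E[D_{t+1}-D_t\mid\mathcal H_t]\le-1/(4m)$, with increments $|D_{t+1}-D_t|\le1$. Hoeffding's lemma with $\theta=1/(4m)$ contracts $\E e^{\theta D_t}$ geometrically. Starting from $D_s\le s$ at the deterministic time $s\le C_\mu\Lambda_T$, this yields $\E e^{\theta D_T}\le\exp\{s/(4m)-(T-s)/(32m^2)\}+c_m$, where $c_m$ depends only on $m$. The hypothesis $T\ge(1+8m)C_\mu\Lambda_T$ is exactly what gives $T-s\ge8ms$, which makes the first term at most one. In your plan this hypothesis appears only in Step~4, whereas its real role is to leave enough post-release rounds to erase the $O(s)$ inherited imbalance.

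\textbf{Smaller repairs.} First, the drift holds only on the good event, and conditioning on that event distorts the transition probabilities. The paper therefore couples EG-ATT after time $s$ with an auxiliary process that always randomizes between the two least-sampled true Pareto arms. That process is defined on every trajectory and coincides with EG-ATT on the good event. Second, in Step~2 the gate may fire before all counts reach $512\sigma^2\Lambda_T/\eta_\mu^2$. You should instead derive $Z_{t,i}\ge4\Lambda_T$ and $R_{t,i}\sqrt{1+Z_{t,i}}\ge8\sqrt2\,\Lambda_T$ directly from $\widehat\eta_\tau\ge8r_\tau^{\max}$. Likewise, $B_T\le C_\mu\Lambda_T$ holds on $\mathcal E$ intersected with the Chernoff event for the quota draws, not on $\mathcal E$ alone.
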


In terms of order, we obtain
\(
\E_\mu\F_T
=O_\mu(\frac{\log T}{T})\) and
\(
\Var_\mu(\F_T)
=O_\mu(\frac{\log^2T}{T^2})
\) by noting that $\Lambda_T = O(\log T)$.
On the positive side, compared to Theorem~\ref{thm:ungated}, we preserve the same order
of expected fairness but substantially reduce the order of the variance,
from $O(\frac{1}{T})$ to $O_\mu(\frac{\log^2T}{T^2})$.
This quadratic decay has a clear practical advantage.
To see this, the relative standard error becomes
\[
\nicefrac{\sqrt{\operatorname{Var}(\overline{\mathcal F}_{T,R})}}
{\mathbb E[\mathcal F_T]}
=
O(R^{-1/2}); 
\]
it leverages the fact that the upper bound on the expected fairness implies that the policy is uniformly fair (sublinearity holds by the logarithmic rate). Therefore, maintaining a fixed relative estimation precision
may only require a constant number of runs, independent of
the time horizon $T$, in contrast to the previous order of
$T/\log^2 T$ runs.

However, on the flip side, the upper bounds 
differ in their leading coefficients. The consequences, and thus the decision regarding which method to use, depend on the interplay between the separation margin $\eta_{\mu}$ and the horizon $T$. Precisely, for the mean fairness, with $m=|\mathcal A^\star|$, the coefficient
of $\log T/T$ is
$8(K-m)c_\tau^2\sigma^2/\eta_\mu^2$ for the original top-two rule where $c_\tau>32\sqrt{2}B_{\mathcal M}^2/\eta_\mu^2$,
compared with
$14C_\mu=68K+390+20480K\sigma^2/\eta_\mu^2$
for the new variant. We observe that when $\eta_{\mu}$ is really small, which means that it is hard for us to sort out Pareto membership, using the new variant may offer a greater advantage, as its coefficient is of order $\nicefrac{1}{\eta_{\mu}^2}$, whereas the original rule exhibits $\nicefrac{1}{\eta_{\mu}^6}$. This appears very natural and consistent with our earlier motivation. When $\eta_{\mu}$ is large, then the plausibly larger coefficient of the new variant reflects
the trade-off introduced by the variance reduction mechanism. The quota allocates additional pulls to undersampled arms, including dominated arms, and thus is more conservative, at the cost of balancing the
Pareto optimal arms. When the horizon $T$ is very large (e.g., at least $10^5$), it is still  surely more advantageous to use the new variant for its rate advantage. At smaller horizons, such an advantage fades away as the larger coefficient unfortunately becomes the bottleneck. Perhaps when the problem is easier, it is better to be less conservative, as the original method does. In fact, our previous concern about $R=\Omega(T)$ may also fade, and then the original top-two method remains a safer choice.

For the variance, the comparison is more subtle, as the original
top-two rule has both a $1/T$ term and terms of order
$\log^2 T/T^2$ in its upper bound. The coefficient of the
former is $16384m^2$, independent of $\eta_\mu$, whereas
the coefficient of the latter includes
$128(K-m)^2c_\tau^4\sigma^4/\eta_\mu^4$,
which grows at the order of $1/\eta_\mu^{12}$ when
$c_\tau\asymp 1/\eta_\mu^2$.
In comparison, the new variant has a variance bound of order
$\log^2 T/T^2$, with coefficient
$392C_\mu^2=2(68K+390+20480K\sigma^2/\eta_\mu^2)^2$,
which grows at the order of $1/\eta_\mu^4$.
Therefore, when $\eta_\mu$ is small, the new variant may
offer an advantage not only through its faster variance
decay but also through its dependence on the separation
margin. When $\eta_\mu$ is larger, the advantage still holds when the horizon is large enough; nevertheless, it likewise becomes less clear at smaller horizons. 

\section{Simulation Experiments}\label{sec:fairness-numerics}

Conventionally, upper bounds are conservative and thus reflect performance
in a pessimistic sense. A pathway to examine potentially more optimistic
performance is to conduct numerical studies and report the observed
performance. Oftentimes, simulation experiments suffice for this purpose. We explore this idea in the context of fairness herein.
Another advantage is that we can sense how the separation margin
and the time horizon affect the actual mean and variance of unfairness
under the two methods. Moreover, this supports comparisons with
existing methods for stochastic multi-objective bandits. Though these
methods are mostly empirical or heuristic, they sometimes do work in practice;  we believe that it is
crucial to understand when they work, when they do not, which ones
work, and why.

\paragraph{Experimental setup.}
We consider a six-arm Bernoulli instance with four Pareto-optimal arms
and a five-arm Gaussian instance with three Pareto-optimal arms.
The separation margins are $\eta_\mu\in\{0.005,0.01,0.1\}$ for Bernoulli
rewards and $\eta_\mu\in\{0.005,0.02,0.1\}$ for Gaussian rewards.
Within each family, we vary the margin through affine transformations
of the mean reward vectors while preserving the Pareto set.
All mean vectors lie in $[0,1]^2$, and the Gaussian rewards have
standard deviation $\sigma=0.12$.
The mean vectors are reported in Appendix~\ref{app:fairness-instance-details}.

We consider $T\in\{5{,}000,50{,}000\}$ and initialize every arm
with two observations. For each instance, we measure terminal unfairness
$\mathcal F_T=D_{\mathrm{KL}}(\pi^\star\|\widehat\pi_T)$,
where $\pi^\star=\operatorname{Unif}(\mathcal A^\star)$ and
$\widehat\pi_T(i)=N_i(T)/T$. We report its sample mean and variance
over $240$ independent held-out runs.

The baselines include Pareto-UCB1~\citep{drugan2013designing},
Pareto Thompson sampling (PTS)~\citep{yahyaa2014annealing},
Width-guided UCB~\citep{guan2026stochastic},
Pareto-KG~\citep{yahyaa2014knowledge}, and
Annealing-Pareto~\citep{yahyaa2014annealing}. Their parameters mostly follow from what have been reported in their respective work. We tune ATT and EG-ATT separately for each instance and horizon through
noisy Bayesian optimization (BO), using two objectives:
\begin{align*}
J_{\mathrm{mean+SD}}
=\widehat{\mathbb E}\mathcal F_T
+\frac12\sqrt{\widehat{\operatorname{Var}}(\mathcal F_T)}, 
J_{\log}=\frac12\log\!\left(\widehat{\mathbb E}\mathcal F_T+10^{-4}\right)+\frac12\log\!\left(\widehat{\operatorname{Var}}(\mathcal F_T)+10^{-6}\right).
\end{align*}
We vary the temperature and leader probability of both methods,
and additionally the quota and gate parameters of EG-ATT.
Each search evaluates $20$ configurations on $28$ pilot runs,
compares the leading candidates on $90$ validation runs,
and evaluates on the held-out runs.

Tables~\ref{tab:fairness_bernoulli_two_T}
--\ref{tab:fairness_gaussian_two_T} summarize the results. We hope to interpret them and extract the insights in what follows. 

\begin{table*}[t]
\centering
\small
\setlength{\tabcolsep}{3.2pt}
\caption{Mean and variance of terminal KL unfairness on Bernoulli rewards for $T\in\{5{,}000,50{,}000\}$. The smallest mean and variance for each separation margin and horizon are in bold.}
\label{tab:fairness_bernoulli_two_T}
\begin{tabular}{lrrrrrr}
\toprule
 & \multicolumn{2}{c}{$\eta_\mu=0.005$} & \multicolumn{2}{c}{$\eta_\mu=0.01$} & \multicolumn{2}{c}{$\eta_\mu=0.1$} \\
\cmidrule(lr){2-3}\cmidrule(lr){4-5}\cmidrule(lr){6-7}
Method & Mean & Variance & Mean & Variance & Mean & Variance \\
\midrule
\multicolumn{7}{l}{\textit{Horizon $T=5,000$}} \\
ATT (mean+SD BO) & $0.3655$ & $0.0042$ & $0.3041$ & $0.0090$ & $0.0246$ & $0.0028$ \\
ATT (log BO) & $\boldsymbol{0.3651}$ & $0.0043$ & $0.3099$ & $0.0048$ & $0.0246$ & $0.0028$ \\
EG-ATT (mean+SD BO) & $0.3719$ & $0.0014$ & $\boldsymbol{0.3020}$ & $0.0100$ & $0.0227$ & $6.55\times 10^{-5}$ \\
EG-ATT (log BO) & $0.4048$ & $\boldsymbol{2.70\times 10^{-6}}$ & $0.4053$ & $\boldsymbol{7.50\times 10^{-10}}$ & $0.2650$ & $\boldsymbol{1.92\times 10^{-14}}$ \\
Pareto-UCB1 & $0.3686$ & $0.0015$ & $0.3291$ & $0.0015$ & $0.1116$ & $0.00034$ \\
PTS & $0.4355$ & $0.0319$ & $0.3418$ & $0.0259$ & $\boldsymbol{0.0222}$ & $0.0025$ \\
Width-guided UCB & $0.3753$ & $0.0016$ & $0.3345$ & $0.0020$ & $0.2208$ & $0.0067$ \\
Pareto-KG (adapted) & $0.6248$ & $0.2845$ & $0.4734$ & $0.2805$ & $0.3119$ & $0.2331$ \\
Annealing-Pareto & $3.0503$ & $0.9311$ & $3.0684$ & $0.8474$ & $2.0157$ & $0.6267$ \\
\midrule
\multicolumn{7}{l}{\textit{Horizon $T=50,000$}} \\
ATT (mean+SD BO) & $0.2495$ & $0.0168$ & $\boldsymbol{0.1368}$ & $0.0282$ & $0.0068$ & $0.0057$ \\
ATT (log BO) & $0.2758$ & $0.0044$ & $0.1578$ & $0.0044$ & $0.0140$ & $0.0182$ \\
EG-ATT (mean+SD BO) & $\boldsymbol{0.2488}$ & $0.0200$ & $0.1415$ & $0.0066$ & $\boldsymbol{0.0028}$ & $7.72\times 10^{-7}$ \\
EG-ATT (log BO) & $0.2730$ & $0.0047$ & $0.1566$ & $0.0044$ & $0.0036$ & $\boldsymbol{2.08\times 10^{-7}}$ \\
Pareto-UCB1 & $0.3079$ & $\boldsymbol{0.0012}$ & $0.2460$ & $\boldsymbol{0.0012}$ & $0.0306$ & $1.64\times 10^{-5}$ \\
PTS & $0.2846$ & $0.0276$ & $0.1837$ & $0.0272$ & $0.0033$ & $2.68\times 10^{-6}$ \\
Width-guided UCB & $0.2940$ & $0.0021$ & $0.2032$ & $0.0012$ & $1.6001$ & $0.0106$ \\
Pareto-KG (adapted) & $0.2556$ & $0.0628$ & $0.1436$ & $0.0457$ & $0.2565$ & $0.3260$ \\
Annealing-Pareto & $4.7997$ & $1.5698$ & $4.5389$ & $1.7251$ & $2.8636$ & $1.3302$ \\
\bottomrule
\end{tabular}
\end{table*}

\begin{table*}[t]
\centering
\small
\setlength{\tabcolsep}{3.2pt}
\caption{Mean and variance of terminal KL unfairness on Gaussian rewards for $T\in\{5{,}000,50{,}000\}$. The smallest mean and variance for each separation margin and horizon are in bold.}
\label{tab:fairness_gaussian_two_T}
\begin{tabular}{lrrrrrr}
\toprule
 & \multicolumn{2}{c}{$\eta_\mu=0.005$} & \multicolumn{2}{c}{$\eta_\mu=0.02$} & \multicolumn{2}{c}{$\eta_\mu=0.1$} \\
\cmidrule(lr){2-3}\cmidrule(lr){4-5}\cmidrule(lr){6-7}
Method & Mean & Variance & Mean & Variance & Mean & Variance \\
\midrule
\multicolumn{7}{l}{\textit{Horizon $T=5,000$}} \\
ATT (mean+SD BO) & $0.3188$ & $0.0098$ & $0.0516$ & $0.0123$ & $\boldsymbol{0.0018}$ & $1.94\times 10^{-6}$ \\
ATT (log BO) & $0.3468$ & $0.0078$ & $0.0516$ & $0.0123$ & $0.0026$ & $1.12\times 10^{-6}$ \\
EG-ATT (mean+SD BO) & $\boldsymbol{0.2826}$ & $0.0196$ & $\boldsymbol{0.0390}$ & $0.0020$ & $0.0072$ & $\boldsymbol{<10^{-15}}$ \\
EG-ATT (log BO) & $0.4923$ & $\boldsymbol{4.89\times 10^{-5}}$ & $0.2196$ & $\boldsymbol{6.07\times 10^{-15}}$ & $0.0072$ & $2.19\times 10^{-9}$ \\
Pareto-UCB1 & $0.4698$ & $0.00024$ & $0.3860$ & $0.00027$ & $0.1462$ & $3.90\times 10^{-5}$ \\
PTS & $0.3175$ & $0.0321$ & $0.0513$ & $0.0081$ & $0.0035$ & $1.75\times 10^{-6}$ \\
Width-guided UCB & $0.4859$ & $0.00023$ & $0.4351$ & $0.00023$ & $0.3289$ & $0.00014$ \\
Pareto-KG (adapted) & $0.3118$ & $0.0243$ & $0.0757$ & $0.0086$ & $0.0176$ & $0.0255$ \\
Annealing-Pareto & $0.3928$ & $0.0380$ & $0.1352$ & $0.0147$ & $0.0443$ & $9.72\times 10^{-6}$ \\
\midrule
\multicolumn{7}{l}{\textit{Horizon $T=50,000$}} \\
ATT (mean+SD BO) & $0.0713$ & $0.0168$ & $0.0066$ & $7.44\times 10^{-6}$ & $0.00030$ & $1.47\times 10^{-8}$ \\
ATT (log BO) & $0.0946$ & $0.0017$ & $0.0066$ & $7.44\times 10^{-6}$ & $\boldsymbol{0.00018}$ & $2.03\times 10^{-7}$ \\
EG-ATT (mean+SD BO) & $0.0699$ & $0.0046$ & $\boldsymbol{0.0046}$ & $3.22\times 10^{-6}$ & $0.0017$ & $\boldsymbol{<10^{-15}}$ \\
EG-ATT (log BO) & $\boldsymbol{0.0603}$ & $0.0055$ & $0.0069$ & $\boldsymbol{1.34\times 10^{-6}}$ & $0.0013$ & $6.68\times 10^{-12}$ \\
Pareto-UCB1 & $0.4194$ & $0.00020$ & $0.2375$ & $8.11\times 10^{-5}$ & $0.0382$ & $1.95\times 10^{-6}$ \\
PTS & $0.0919$ & $0.0273$ & $0.0070$ & $8.60\times 10^{-6}$ & $0.00052$ & $3.50\times 10^{-8}$ \\
Width-guided UCB & $0.4538$ & $\boldsymbol{0.00015}$ & $0.3667$ & $0.00011$ & $0.7097$ & $0.00027$ \\
Pareto-KG (adapted) & $0.1254$ & $0.0060$ & $0.0139$ & $0.0071$ & $0.0099$ & $0.0216$ \\
Annealing-Pareto & $0.8993$ & $0.2025$ & $0.1864$ & $0.1639$ & $0.0044$ & $8.48\times 10^{-8}$ \\
\bottomrule
\end{tabular}
\end{table*}

\paragraph{Effect of the separation margin and horizon.} A larger separation margin reduces mean unfairness for both proposed methods in every instance considered. For example, at $T=50{,}000$, the mean unfairness of ATT on Gaussian rewards decreases from $0.0713$ at $\eta_\mu=0.005$ to $2.96\times10^{-4}$ at $\eta_\mu=0.1$, while that of EG-ATT decreases from $0.0699$ to $0.00168$. This coincides with the theory, where the bounds scale reversely with the margin $\eta$ and also the intuition that a smaller margin may indicate a harder instance. Increasing the horizon from $5{,}000$ to $50{,}000$ further reduces the mean for both methods in all six instances, also sort of predictable by the rate $O(\nicefrac{\log{T}}{T})$. The variance exhibits a different pattern. On Bernoulli rewards with $\eta_\mu=0.01$, the mean unfairness of ATT decreases from $0.3041$ to $0.1368$, whereas its variance increases from $0.00904$ to $0.0282$. Hypothetically, this may be due to that the horizon $T$ has not reached the sufficient horizon for the upper bound on variance. 

\paragraph{The role of quota protection.}
Quota protection helps balance the allocation among Pareto-optimal
arms, but can also incur additional pulls of dominated arms.
The relative importance of these two effects determines its
impact on mean unfairness and variance.
For Bernoulli rewards at $T=50{,}000$, EG-ATT reduces the
variance by $77\%$ at $\eta_\mu=0.01$, with similar mean unfairness,
and improves both metrics at $\eta_\mu=0.1$. Likewise, for gaussian rewards, EG-ATT also improves both metrics at
$\eta_\mu=0.02$. Conversely, in some cases, for example at $\eta_\mu=0.1$, it achieves nearly
zero variance at the cost of larger mean unfairness.

\paragraph{Effect of the BO objective.}
The tuning objective also shapes the trade-off of mean and variance we observe. The distinction is particularly visible at $T=5{,}000$. On Bernoulli rewards with $\eta_\mu=0.01$, log-tuned EG-ATT attains variance $7.50\times10^{-10}$ but mean unfairness $0.4053$, whereas mean$+$SD tuning yields mean $0.3020$ and variance $0.0100$.  We observe a similar outcome for Gaussian rewards at $\eta_\mu=0.02$: log-tuned EG-ATT has variance $6.07\times10^{-15}$ and mean unfairness $0.2196$, compared with $0.0020$ and $0.0390$ under mean$+$SD tuning. 

\paragraph{Comparison with existing methods.} Among all the methods, our newly proposed methods almost always outperform, with only one exception where the new methods and PTS are more or less the same. In many cases, it achieves an order-of-magnitude improvement. Across $120$ pairwise comparisons with the five baselines, they reduce mean unfairness by more than
$50\%$ in $60$ cases ($50.0\%$), and by more than $90\%$
in $37$ cases ($30.8\%$); e.g., a $96.9\%$ reduction
relative to Annealing-Pareto on Bernoulli rewards with
$\eta_\mu=0.01$ at $T=50{,}000$. Particularly, PTS is competitive when the separation margin and the time horizon are large. For Bernoulli rewards with $\eta_\mu=0.1$ and $T=5{,}000$, it achieves the smallest mean unfairness, $0.0222$, among all; the proposed method achieves $0.0227$. At $T=50{,}000$, its mean is $0.00325$ on Bernoulli rewards and $5.18\times10^{-4}$ on Gaussian rewards. The hypothesis is that by sampling from a posterior Pareto set, PTS can spread pulls among Pareto-optimal arms once their membership becomes clearer which is more feasible with a larger separation margin. Meanwhile, Pareto-KG can achieve a similar mean, such as $0.1436$ on Bernoulli rewards with $\eta_\mu=0.01$ at $T=50{,}000$, but its variance $0.0457$ is much larger. It samples uniformly from a Pareto set adjusted by
knowledge-gradient bonuses, which can yield competitive mean fairness.
However, reward fluctuations affect which arms enter this set and
for how long, thereby yielding larger fluctuations in fairness across runs.

On the contrary, Pareto-UCB1 often has a small variance, attaining the smallest variance among the methods in the table on the two more difficult Bernoulli instances at $T=50{,}000$. Its mean unfairness, however, remains larger than that of the proposed methods. By design, it samples uniformly from an estimated Pareto set using their upper confidence estimates. This tends to stabilize the allocation as the estimates can converge nicely, but the set itself may also contain suboptimal arms. By doing so, it also considers the fairness, though only in part, which explains why it is not the worst. Differently, Width-guided UCB provides a more extreme example. It aims to aggressively select one arm that wins on an objective, en route to minimizing Pareto regret by design. The price of such aggressiveness is that it mostly fails to  balance the pulls among all Pareto-optimal arms. Such price only gets higher as the horizon increases. On Bernoulli rewards with $\eta_\mu=0.1$, its mean unfairness increases from $0.2208$ at $T=5{,}000$ to $1.6001$ at $T=50{,}000$. Annealing-Pareto is among the worst-performing. It incurs much larger unfairness on the Bernoulli instances, while its performance improves substantially on the Gaussian instances as the separation increases. Interestingly, its performance mostly downgrades as the time horizon increases, consistent with the Width-guided UCB but as opposed to the majority of the methods. This certainly makes it less favorable, however, with a clear reason. The algorithm aims to minimize the Shannon entropy based on their measure; as introduced earlier, this is flawed by definition.

\section{Conclusion and AI Use Statement}\label{final}

Stochastic multi-objective bandits have shown promise in online advertising where a single objective can be far from sufficient for decision makers. With the progress of the community, we have a decent understanding of how to identify and select Pareto optimal arms. However, it seems that we know much less about how to select among these optimal arms in a balanced manner. One barrier is that there seems to be a lack of consensus on a proper measure for it, and another barrier is that the set of Pareto optimal arms is typically unknown. 
Surmounting these barriers requires more careful control
of the sampling trajectory while learning the Pareto set.
We take steps herein toward this goal.
 We first introduce a measure for unfairness based on the KL divergence from the ideal measure, the uniform distribution over the Pareto optimal arms, to the actual measure, the empirical measure of the sampling trajectory induced by a policy. It resembles several existing measures or corrects their limitations while remaining well defined in any circumstance. An exact decomposition of this measure further provides pathways for method design. We propose an adaptive top-two sampling method, which defines a gain criterion for each arm at each time step based on a soft target for the true fairness gain, and randomly selects between the top two arms to promote exploration. The upper bound of the method exhibits a satisfactory rate for the expected unfairness, $O(\nicefrac{\log{T}}{T})$, whereas the variance seems to pose computational challenges. Hence, we further propose a variant aiming to reduce variance, which adds more exploration at earlier stages to avoid misidentified Pareto membership and thus fluctuations in fairness. Numerical experiments also help interpret the theory, show the advantages of the proposed methods, and offer practical insights into existing methods.

Generative AI tools, particularly GPT-6, have assisted the authors with coding, drafting the proofs of the mathematical claims, manuscript polishing, and reviewing the content. The tools did not participate in the ideation, conceptualization, or formalization of the paper. They also helped with the rigorous steps in the method design after the authors sketched the rough picture of the method. The authors have independently revised the content and take responsibility for the final content of this work, including text, claims, or artifacts produced with the aid of generative AI.

\clearpage 
\newpage 

\bibliographystyle{plainnat}
\bibliography{iclr2027_conference}

\clearpage 
\newpage 

\appendix

The appendix proceeds as follows.
Section~\ref{app:ungated-proofs} presents the proofs of our theoretical results. Since fairness essentially depends on the sampling process, it is in fact more convenient to start with the sampling process, such as arm pulls and the imbalance potential.
Hence, we first present the proofs for bounds on arm pulls and imbalance, followed by those for the bounds
on expected unfairness and variance. This is in the reverse order of the results presented in the main body, and we believe this exposition is natural, as the two orders serve different purposes. We then derive the proof for Pareto regret.
We then prove the information-theoretic lower bounds, including the asymptotic
and finite-time bounds under positive Pareto separation and the lower bound
at zero separation. Finally, we establish the guarantees for EG-ATT designed for variance reduction. At the same time, to enhance the reproducibility of the results, 
Section~\ref{app:fairness-instance-details} provides the reward instances,
baseline implementations, and tuning details for the numerical experiments.

\section{Proofs}
\label{app:ungated-proofs}

\subsection{Proof for Arm Pulls and Imbalance}

\begin{proof}[Proof of Theorem~\ref{thm:ungated-moments}]
We first control inaccurate rounds, then use this control
to bound dominated-arm pulls and the imbalance potential.

\emph{Step 1: exploration and estimation errors.}
For this proof, set
\[
\alpha=\frac{(64+32s)\sigma^2}{\eta_\mu^2},
\qquad
t_*=\left\lceil\frac{\sqrt{T_0}}{2}\right\rceil.
\]
Let $\mathcal E_t$ be the event that
\[
\min_iN_i(t)\ge\alpha\log(t+1),
\qquad
\max_{i,r}
|\widetilde\mu_{t,i}^{(r)}-\mu_i^{(r)}|
\le\eta_\mu/4.
\]
We will show that
\begin{equation}\label{eq:ungated-error-sum}
2\sum_{t=2K}^{\infty}
(t+1)\Prob_\mu(\mathcal E_t^c)
\le T_0.
\end{equation}

When $\widehat\Delta_{t,i}>0$, changing only arm $i$'s
projected mean to the upper corner of $\mathcal M$
makes that arm Pareto optimal. Hence
\[
Z_{t,i}\le
\frac{N_i(t)B_{\mathcal M}^2}{2\sigma^2}.
\]
Using $\widehat\Delta_{t,i}\le B_{\mathcal M}$ and
$\sqrt{x^2+x}\le x+1/2$ gives
\begin{equation}\label{eq:ungated-exponent}
\frac{R_{t,i}\sqrt{1+Z_{t,i}}}{c_\tau}
\le
\frac{B_{\mathcal M}^2N_i(t)}
{\sqrt{2}\sigma^2c_\tau}
+\frac{1}{\sqrt{2}c_\tau}.
\end{equation}
When $R_{t,i}=0$, the exponent is zero by convention,
and the same bound holds.

Our choice of $\alpha$ satisfies
\[
\frac{\alpha B_{\mathcal M}^2}
{\sqrt{2}\sigma^2c_\tau}
=
(1-s)(1+s/2)
\le1-s/2,
\qquad
\frac{\alpha\eta_\mu^2}{32\sigma^2}=2+s.
\]
Also, $\eta_\mu\le B_{\mathcal M}$ and
\eqref{eq:large-temp} imply
$c_\tau>32\sqrt{2}$, so
$\exp\{1/(\sqrt{2}c_\tau)\}<2$.

Fix $t\ge t_*$ and suppose that
$N_a(t)<\alpha\log(t+1)$ for some arm $a$.
For every earlier round $\ell$, monotonicity of the counts
and \eqref{eq:ungated-exponent} imply
\[
G_{\ell,a}
\ge
\frac{(t+1)^{-1+s/2}}
{2(1+\alpha\log(t+1))}.
\]
Since the leader maximizes the gain and
$G_{\ell,b_\ell}\le1/N_{b_\ell}(\ell)$,
\[
N_{b_\ell}(\ell)
\le
2(1+\alpha\log(t+1))(t+1)^{1-s/2}.
\]
Thus, over
$\ell=\lceil t/2\rceil,\ldots,t-1$, the total number of
pulls made as the leader is at most
\[
K\left[
2(1+\alpha\log(t+1))(t+1)^{1-s/2}+2
\right]
\le \frac{t}{16}.
\]
For the last inequality, use
$\alpha\le96\sigma^2/\eta_\mu^2$,
$\log(t+1)\le4(t+1)^{s/4}/s$, and
\[
t\ge t_*
\ge
\left[
128K\left(1+\frac{384\sigma^2}{s\eta_\mu^2}\right)
\right]^{4/s},
\]
which follows from \eqref{eq:ungated-horizon}.

At every round, the leader is selected with probability
at least $0.45$, and therefore at least $2/5$.
Using independent uniform random variables to implement
these selections, the number of leader pulls over these
rounds is bounded below by a binomial random variable
with parameters $\lfloor t/2\rfloor$ and $2/5$.
A Chernoff bound consequently gives
\[
\Prob_\mu\left\{
\min_iN_i(t)<\alpha\log(t+1)
\right\}
\le e^{-t/64},
\qquad t\ge t_*.
\]

Projection onto $\mathcal M$ cannot increase the error
relative to the true mean. Applying the sub-Gaussian
bound to the reward tapes and summing over all sample
sizes $n\ge\lceil\alpha\log(t+1)\rceil$ yields
\begin{align*}
\Prob_\mu(\mathcal E_t^c)
&\le
e^{-t/64}
+
\frac{2Kd}{1-e^{-\eta_\mu^2/(32\sigma^2)}}
(t+1)^{-2-s}\\
&\le
e^{-t/64}
+
2Kd\left(1+\frac{32\sigma^2}{\eta_\mu^2}\right)
(t+1)^{-2-s},
\qquad t\ge t_*.
\end{align*}
Here we used
$(1-e^{-x})^{-1}\le1+x^{-1}$ for $x>0$.
For earlier rounds, use the trivial probability bound
of one. Therefore,
\begin{align*}
2\sum_{t=2K}^{\infty}
(t+1)\Prob_\mu(\mathcal E_t^c)
&\le
t_*(t_*+1)+8450
+\frac{4Kd}{s}
\left(1+\frac{32\sigma^2}{\eta_\mu^2}\right)\\
&\le T_0.
\end{align*}
Indeed, the first term is at most $T_0/2$, and each of
the remaining terms is at most $T_0/4$, by the explicit
definition of $T_0$. The numerical bound $8450$ follows
from
\[
2\sum_{t=0}^{\infty}(t+1)e^{-t/64}
=
\frac{2}{(1-e^{-1/64})^2}
\le2(1+64)^2.
\]
This proves \eqref{eq:ungated-error-sum}.

Let
\[
M_T=\sum_{t=2K}^{T-1}\ind\{\mathcal E_t^c\}
\]
count inaccurate rounds. Expanding the square and using
$\Prob_\mu(\mathcal E_u^c\cap\mathcal E_t^c)
\le\Prob_\mu(\mathcal E_t^c)$ for $u\le t$, we obtain
\begin{equation}\label{eq:ungated-M2}
\E_\mu M_T^2
\le
2\sum_{t=2K}^{T-1}
(t+1)\Prob_\mu(\mathcal E_t^c)
\le T_0.
\end{equation}

\emph{Step 2: dominated-arm pulls.}
On $\mathcal E_t$, the empirical Pareto set equals
$\Astar$, and every true Pareto arm has gain
$\log(1+1/N_i(t))$.

For each dominated arm $i$, choose a Pareto optimal arm
$p_i$ such that
\[
\mu_{p_i}^{(r)}-\mu_i^{(r)}\ge\eta_\mu
\qquad\text{for every }r.
\]
Such an arm exists by the definition of
$\eta_\mu^{\rm D}$: if a dominating arm is not Pareto
optimal, replacing it by an arm that dominates it
cannot decrease any coordinatewise advantage.

On $\mathcal E_t$, each empirical advantage of $p_i$
over $i$ is at least $\eta_\mu/2$. Consequently,
\[
R_{t,i}
\ge\frac{\eta_\mu\sqrt{N_i(t)}}{2\sigma},
\qquad
Z_{t,i}
\ge
\frac{\eta_\mu^2}{8\sigma^2}
\frac{N_i(t)N_{p_i}(t)}
{N_i(t)+N_{p_i}(t)}.
\]
For the second inequality, any alternative that makes
arm $i$ Pareto optimal must break its domination by
$p_i$. If the two mean vectors change by amounts $u$
and $v$ in the infinity norm, this requires
$u+v\ge\eta_\mu/2$. Minimizing the corresponding weighted
sum of squares gives the stated bound. It follows that
\[
\frac{R_{t,i}\sqrt{1+Z_{t,i}}}{c_\tau}
\ge
\frac{\eta_\mu^2}{4\sqrt{2}c_\tau\sigma^2}
N_i(t)
\sqrt{\frac{N_{p_i}(t)}
{N_i(t)+N_{p_i}(t)}}.
\]

If a dominated arm is selected on $\mathcal E_t$, its
gain is at least $1/(t+1)$. When $m\ge2$, two true
Pareto arms have gains at least this large. When $m=1$,
the singleton rule selects the leader, whose gain is
at least that of the true Pareto arm. Since
$\log(1+1/N_i(t))\le1$, the exponent in its gain is
therefore at most $\log(t+1)$.

Using $N_{p_i}(t)\ge\alpha\log(t+1)$ and considering
$N_i(t)\le N_{p_i}(t)$ and
$N_i(t)>N_{p_i}(t)$ separately gives
\begin{align*}
N_i(t)
&\le
\max\left\{
\frac{8c_\tau\sigma^2}{\eta_\mu^2},
\frac{64c_\tau^2\sigma^4}{\alpha\eta_\mu^4}
\right\}\log(t+1)\\
&\le
\frac{c_\tau^2\sigma^2}{\eta_\mu^2}\log(t+1).
\end{align*}
The last inequality uses
$\alpha\ge64\sigma^2/\eta_\mu^2$ and $c_\tau>8$.

Accounting for initialization, the last pull on an
accurate round, and pulls on inaccurate rounds yields
the pathwise bound
\begin{equation}\label{eq:ungated-Bbound}
B_T
\le
(K-m)\left(
3+\frac{c_\tau^2\sigma^2}{\eta_\mu^2}\log(T+1)
\right)+M_T.
\end{equation}
Taking the $L^2$ norm and applying
\eqref{eq:ungated-M2} proves
\eqref{eq:ungated-B-moments}.

\emph{Step 3: imbalance among Pareto optimal arms.}
When $m=1$, $\Phi_T=0$. Suppose $m\ge2$ and write
\[
H_t=\sum_{i\in\Astar}N_i(t),
\qquad
x_i(t)=N_i(t)-H_t/m.
\]
A pull of Pareto arm $i$ changes the potential by
\[
\Phi_{t+1}-\Phi_t
=
2x_i(t)+1-\frac1m,
\]
whereas a dominated-arm pull leaves it unchanged.

On $\mathcal E_t$, if exactly one candidate is Pareto
optimal, it is a least-sampled true Pareto arm and has
a nonpositive centered count. If both candidates are
Pareto optimal, they are the two least-sampled true
Pareto arms. Their centered counts sum to at most zero,
and the smaller-count candidate is selected with
probability at least one half. Thus,
\[
\E_\mu[
\Phi_{t+1}-\Phi_t\mid\mathscr F_t
]\le1
\qquad\text{on }\mathcal E_t.
\]
On any round,
$|\Phi_{t+1}-\Phi_t|\le2t+1$.
Since $\Phi_{2K}=0$, \eqref{eq:ungated-error-sum} gives
\[
\E_\mu\Phi_T
\le
T+\sum_{t=2K}^{T-1}
(2t+1)\Prob_\mu(\mathcal E_t^c)
\le T+T_0.
\]
Finally,
\[
0\le\Phi_T
=
\sum_{i\in\Astar}N_i(T)^2-\frac{H_T^2}{m}
\le H_T^2\le T^2.
\]
Therefore,
\[
\E_\mu\Phi_T^2
\le T^2\E_\mu\Phi_T
\le T^2(T+T_0),
\]
which completes the proof.
\end{proof}

\subsection{Proof for the Expected Fairness and Variance}

\begin{proof}[Proof of Theorem~\ref{thm:ungated}]
Fix $T\ge T_0$ and let $n=T-2K$.
The definition of $T_0$ ensures $T\ge12K$, so
$n\ge5T/6$. It also gives
\[
(K-m)\left(
3+\frac{c_\tau^2\sigma^2}{\eta_\mu^2}\log(T+1)
\right)
\le\frac{T}{16}\le\frac n8,
\]
using $\log(T+1)\le\sqrt T$.
Hence \eqref{eq:ungated-Bbound} implies
\[
B_T\le n/8+M_T.
\]

We first verify that every Pareto optimal arm receives
a positive fraction of the observations, except on an
event whose contribution to the fairness moments is
small.

Suppose $m\ge2$. Let $J_t$ indicate that $\mathcal E_t$
holds and at least one candidate is a true Pareto arm.
Let $I_t$ indicate that the selected arm is a currently
least-sampled true Pareto arm.
On a round counted by $J_t$, the probability of such
a selection is at least $0.30$. Indeed, when both
candidates are Pareto optimal, the leader is least
sampled and is selected with probability at least
$0.70$; when exactly one is Pareto optimal, it is least
sampled and is selected with probability $0.30$ or
$0.70$. Therefore,
\[
\E_\mu[I_t\mid\mathscr F_t]\ge0.30J_t.
\]
Moreover,
\[
\sum_{t=2K}^{T-1}J_t
\ge n-B_T-M_T
\ge\frac{7n}{8}-2M_T.
\]
On $\{M_T\le n/16\}$, this sum is at least $3n/4$.
By \eqref{eq:ungated-M2}, Markov's inequality, and
Azuma's inequality,
\[
\Prob_\mu\left\{
\sum_{t=2K}^{T-1}I_t<\frac{3n}{20}
\right\}
\le
\frac{256T_0}{n^2}
+\exp\left\{-\frac{9n}{3200}\right\}.
\]

Starting from two pulls per arm, at most $m$ selections
of a currently least-sampled Pareto arm are needed to
increase the minimum Pareto-arm count by one.
Other selections cannot decrease any count. Thus,
\[
\min_{i\in\Astar}N_i(T)
\ge
2+\left\lfloor
\frac1m\sum_{t=2K}^{T-1}I_t
\right\rfloor.
\]
If the sum is at least $3n/20$, then
\[
\min_{i\in\Astar}N_i(T)
\ge\frac{3n}{20m}
\ge\frac{T}{8m}.
\]

When $m=1$, the same count bound follows directly on
$\{M_T\le n/16\}$ from
\[
N_p(T)=T-B_T
\ge T-\frac n8-\frac n{16}
\ge\frac{13T}{16},
\]
where $p$ is the unique Pareto optimal arm.

Consequently, for
\[
\mathcal C_T=
\left\{
\min_{i\in\Astar}N_i(T)\ge\frac{T}{8m}
\right\},
\]
we have
\begin{equation}\label{eq:ungated-count-event}
\Prob_\mu(\mathcal C_T^c)
\le
\frac{256T_0}{n^2}
+\exp\left\{-\frac{9n}{3200}\right\}
\le\frac{512T_0}{T^2}.
\end{equation}
For the final inequality, use $n\ge5T/6$ and
\[
e^{-9n/3200}
\le e^{-3T/1280}
\le\frac{2(1280/3)^2}{T^2}
\le\frac{T_0}{T^2},
\]
where the last inequality follows from the explicit
size of $T_0$.

On $\mathcal C_T$, we have $H_T=T-B_T\ge T/8$.
The exact decomposition and its chi-square bound give
\begin{align*}
\F_T
&\le
\frac{B_T}{H_T}
+\frac1{H_T}
\sum_{i\in\Astar}
\frac{(N_i(T)-H_T/m)^2}{N_i(T)}\\
&\le
\frac{8B_T}{T}+\frac{64m\Phi_T}{T^2}.
\end{align*}
On its complement, initialization gives
$0\le\F_T\le\log(T/m)$.
Splitting the expectation over these events yields
\[
\E_\mu\F_T
\le
\frac{8\E_\mu B_T}{T}
+\frac{64m\E_\mu\Phi_T}{T^2}
+\frac{512T_0}{T^2}\log\frac{T}{m}.
\]
Similarly, using $\Var_\mu(\F_T)\le\E_\mu\F_T^2$,
\[
\Var_\mu(\F_T)
\le
\frac{128\E_\mu B_T^2}{T^2}
+\frac{8192m^2\E_\mu\Phi_T^2}{T^4}
+\frac{512T_0}{T^2}\log^2\frac{T}{m}.
\]
Substitute Theorem~\ref{thm:ungated-moments} and use
$T+T_0\le2T$ to obtain
\eqref{eq:ungated-mean} and
\eqref{eq:ungated-variance}.

Finally, convexity of KL divergence in its second
argument gives
\[
\bar{\mathcal F}_T
=
\KL(\pi^\star\|\E_\mu\widehat\pi_T)
\le
\E_\mu\KL(\pi^\star\|\widehat\pi_T)
=
\E_\mu\F_T.
\]
This completes the proof.
\end{proof}

\subsection{Proof for the Pareto Regret}

\begin{proof}
We retain the individual arm gaps in the proof of
Theorem~\ref{thm:ungated-moments}.
Let $\mathcal E_t$ and $M_T$ be the accurate-round event
and the number of inaccurate rounds defined there.
In particular, $\E_\mu M_T^2\le T_0$.

For a dominated arm $i$, choose a Pareto optimal arm $p_i$
attaining its domination margin:
\[
\min_{r\in[d]}
\bigl(\mu_{p_i}^{(r)}-\mu_i^{(r)}\bigr)
=
\Delta_i^{\rm P}.
\]
Such a maximizing arm can be chosen in $\Astar$, since
replacing a dominated maximizer by an arm that dominates
it cannot decrease any coordinatewise advantage.
On $\mathcal E_t$, every empirical advantage of $p_i$
over $i$ is at least
\[
\Delta_i^{\rm P}-\frac{\eta_\mu}{2}
\ge\frac{\Delta_i^{\rm P}}2.
\]
Therefore, the gain comparison in Step~2 of the arm-count
proof applies with $\Delta_i^{\rm P}$ in place of
$\eta_\mu$. Whenever arm $i$ is selected on $\mathcal E_t$,
it gives
\begin{align*}
N_i(t)
&\le
\sigma^2
\max\left\{
\frac{8c_\tau}{(\Delta_i^{\rm P})^2},
\frac{c_\tau^2\eta_\mu^2}{(\Delta_i^{\rm P})^4}
\right\}\log(t+1)\\
&\le
\frac{c_\tau^2\sigma^2}{(\Delta_i^{\rm P})^2}
\log(t+1).
\end{align*}
The last inequality uses $\Delta_i^{\rm P}\ge\eta_\mu$
and $c_\tau>8$, which follows from the temperature
condition and $\eta_\mu\le B_{\mathcal M}$.

Accounting for initialization and the last pull on an
accurate round, and charging each inaccurate round at
most $\Delta_{\max}^{\rm P}$ regret, yields
\[
R_T^{\rm P}
\le
3\sum_{i\notin\Astar}\Delta_i^{\rm P}
+
c_\tau^2\sigma^2\log(T+1)
\sum_{i\notin\Astar}\frac1{\Delta_i^{\rm P}}
+
\Delta_{\max}^{\rm P}M_T.
\]
Taking expectations and using
$\E_\mu M_T\le\sqrt{\E_\mu M_T^2}\le\sqrt{T_0}$
proves \eqref{eq:ungated-pareto-regret}.

Finally,
$R_T^{\rm P}\le\Delta_{\max}^{\rm P}B_T$
follows directly from the definition of regret.
The exact unfairness decomposition gives
$\F_T\ge B_T/T$, proving
\eqref{eq:regret-fairness}.
\end{proof}

\subsection{Proofs for the Lower Bounds}
\label{app:lower-proofs}

\begin{proof}[Proof of Theorem~\ref{thm:lower-asymptotic}]
Fix a dominated arm $i$ and an admissible alternative $Q_i$
such that the resulting instance $\nu^{(i)}$ satisfies
\[
\eta_{\nu^{(i)}}>0,
\qquad
i\in\Astar(\nu^{(i)}),
\qquad
0<\KL(P_i\|Q_i)<\infty.
\]
For notation convenience, we write $n_i=\E_\mu N_i(T)$ and
$E_i=\{N_i(T)\le\sqrt T\}$.

We first establish an event bound under the alternative instance. 
Let $m_i'=|\Astar(\nu^{(i)})|\le K$. By data processing for KL divergence, on the event $E_i$,
\begin{align*}
\F_T
&\ge
\kl\left(\frac1{m_i'},\frac{N_i(T)}T\right)\\
&\ge
\frac1{m_i'}\log\frac{T}{N_i(T)}-\log2\\
&\ge
\frac{\log T}{2K}-\log2 \ge
\frac{\log T}{4K},
\end{align*}
for all sufficiently large horizons. Here, the second inequality follows from the fact that binary entropy is at most $\log2$. Markov's inequality therefore gives
\[
\Prob_{\nu^{(i)}}(E_i)
\le
\frac{4K}{\log T}\E_{\nu^{(i)}}\F_T.
\]
For every fixed $\zeta\in(0,1)$, uniform fairness
implies
\[
\E_{\nu^{(i)}}\F_T=o(T^{-1+\zeta}).
\]
Consequently,
\[
\Prob_{\nu^{(i)}}(E_i)\le T^{-1+\zeta}
\]
for all sufficiently large $T$.

We next apply the change-of-measure argument explicitly.

Since arm $i$ is dominated under $\mu$ and the policy is uniformly fair, 
\[
n_i
\le T\E_\mu\F_T
=o(T^{1/4}).
\]
By Markov's inequality, 
\[
\Prob_\mu(E_i)
\ge 1-\frac{n_i}{\sqrt T}
\longrightarrow 1.
\]
Moreover, the adaptive KL chain rule gives 
\[
\KL(\Prob_\mu^T\|\Prob_{\nu^{(i)}}^T)
=
n_i\KL(P_i\|Q_i)<\infty,
\]
which implies 
$\Prob_\mu^T\ll\Prob_{\nu^{(i)}}^T$.
Therefore,
$\Prob_{\nu^{(i)}}(E_i)>0$
for all sufficiently large $T$.

When the alternative event has positive probability, i.e., $\Prob_{\nu^{(i)}}(E_i)>0$, the adaptive KL chain rule and data processing give
\begin{align*}
n_i\KL(P_i\|Q_i)
&\ge
\kl\bigl(
\Prob_\mu(E_i),
\Prob_{\nu^{(i)}}(E_i)
\bigr)\\
&\ge
\Prob_\mu(E_i)
\log\frac1{\Prob_{\nu^{(i)}}(E_i)}
-\log2.
\end{align*}
The second inequality again uses the binary entropy bound. 
Hence, we obtain, for all sufficiently large horizons, 
\[
n_i\KL(P_i\|Q_i)
\ge
\left(1-\frac{n_i}{\sqrt T}\right)
(1-\zeta)\log T-\log2.
\]
Thus,
\[
\E_\mu N_i(T)
\ge
\frac{
(1-\zeta)\log T-\log2
}{
\KL(P_i\|Q_i)
+\dfrac{(1-\zeta)\log T}{\sqrt T}
}.
\]

Dividing by $\log T$ and taking lower limits yields
\[
\liminf_{T\to\infty}
\frac{\E_\mu N_i(T)}{\log T}
\ge
\frac{1-\zeta}{\KL(P_i\|Q_i)}.
\]
Letting $\zeta\downarrow0$ and taking the supremum over
all admissible finite-KL alternatives gives
\[
\liminf_{T\to\infty}
\frac{\E_\mu N_i(T)}{\log T}
\ge
\frac1{\kappa_i^{\rm KL}(\mu)}.
\]

Every admissible alternative has strictly positive
KL divergence. Indeed, if 
$\KL(P_i\|Q_i)=0$,
then $P_i=Q_i$ by Pinsker's inequality.
Since the distributions of all other arms remain
unchanged, the resulting instance coincides with 
$\mu$, contradicting 
$i\in\Astar(\nu^{(i)})$ and $i\notin\Astar(\mu)$.

Finally, the unfairness decomposition implies
\[
T\E_\mu\F_T
\ge
\sum_{i\notin\Astar(\mu)}\E_\mu N_i(T).
\]
Since the sum is finite and its terms are nonnegative,
\begin{align*}
\liminf_{T\to\infty}
\frac{T\E_\mu\F_T}{\log T}
&\ge
\sum_{i\notin\Astar(\mu)}
\liminf_{T\to\infty}
\frac{\E_\mu N_i(T)}{\log T}\\
&\ge
\sum_{i\notin\Astar(\mu)}
\frac1{\kappa_i^{\rm KL}(\mu)}.
\end{align*}
This proves \eqref{eq:lower-asymptotic}.
\end{proof}

\begin{proof}[Proof of Theorem~\ref{thm:lower-finite}]
Again, for each arm in the chosen subset, we highlight that we use an alternative that changes only its reward distribution, makes it Pareto optimal, and satisfies
\[
\eta_{\nu^{(i)}}>0,\qquad
i\in\Astar(\nu^{(i)}),\qquad
0<\KL(P_i\|Q_i)<\infty.
\]
Let
\[
r_T=
\max\left\{
\E_\mu\F_T,\,
\max_{i\in\mathcal I}\E_{\nu^{(i)}}\F_T
\right\}.
\]
The result is immediate if $r_T=\infty$, so suppose
$r_T<\infty$.

For each $i\in\mathcal I$, consider the event
\[
E_i=\{N_i(T)\le T/(2K)\}.
\]
Since arm $i$ is dominated under $\mu$ and
$\F_T\ge N_i(T)/T$, Markov's inequality gives
\[
\Prob_\mu(E_i^c)
\le
\frac{2K}{T}\E_\mu N_i(T)
\le 2Kr_T.
\]

Under $\nu^{(i)}$, arm $i$ becomes Pareto optimal.
Writing $m_i'=|\Astar(\nu^{(i)})|\le K$,
data processing and Pinsker's inequality give
\[
\F_T
\ge
\kl\left(\frac1{m_i'},\frac{N_i(T)}T\right)
\ge
2\left(\frac1{m_i'}-\frac{N_i(T)}T\right)^2.
\]
On $E_i$, we have
\[
\frac1{m_i'}-\frac{N_i(T)}T
\ge\frac1{2K},
\]
and therefore $\F_T\ge1/(2K^2)$.
Consequently,
\[
\Prob_{\nu^{(i)}}(E_i)\le2K^2r_T.
\]

The instances $\mu$ and $\nu^{(i)}$ differ only
in the reward distribution of arm $i$.
The adaptive KL chain rule and the
Bretagnolle--Huber inequality (Lemma 7 in \citet{duchi2024information}) imply
\begin{align*}
&\Prob_\mu(E_i^c)+\Prob_{\nu^{(i)}}(E_i)\\
&\qquad\ge
\frac12\exp\left(
-\KL(P_i\|Q_i)\E_\mu N_i(T)
\right).
\end{align*}

Combining the preceding bounds yields
\[
2K(K+1)r_T
\ge
\frac12\exp\left(
-\KL(P_i\|Q_i)\E_\mu N_i(T)
\right).
\]
Since $\KL(P_i\|Q_i)<\infty$
and $\E_\mu N_i(T)\le T<\infty$, the right-hand side is strictly positive,
so $r_T>0$.

Taking logarithms gives
\[
\E_\mu N_i(T)
\ge
\frac1{\KL(P_i\|Q_i)}
\log\frac1{4K(K+1)r_T}.
\]

Summing over the arms and using the pathwise inequality
\[
T\F_T
\ge B_T
\ge\sum_{i\in\mathcal I}N_i(T),
\]
we obtain, after taking expectations, 
\[
Tr_T
\ge
\sum_{i\in\mathcal I}\E_\mu N_i(T)
\ge
S\log\frac1{4K(K+1)r_T}.
\]
Let
\[
x=\frac{Tr_T}{S},
\qquad
z=\frac{T}{4K(K+1)S}.
\]
The preceding inequality gives $xe^x\ge z$.
By assumption, $z\ge e$. Moreover,
\[
(\log z-\log\log z)
e^{\log z-\log\log z}
=
z\left(1-\frac{\log\log z}{\log z}\right)
\le z.
\]
Since $xe^x$ is increasing for $x\ge0$,
we conclude that
\[
x\ge\log z-\log\log z.
\]
Substituting the definitions of $x$ and $z$
proves \eqref{eq:lower-fair-finite}.
\end{proof}

\begin{proof}[Proof of Theorem~\ref{thm:boundary-superlog}]
Let $x_T=\E_{\nu_0}N_1(T)$ and
$E_T=\{N_1(T)\ge T/4\}$.
Markov's inequality gives
\[
\Prob_{\nu_0}(E_T)\le\frac{4x_T}{T}.
\]

Under $\nu_\varepsilon$, both arms are Pareto optimal.
On $E_T^c$, their allocation is sufficiently unbalanced
that
\[
\F_T
=
\frac12\log
\frac{T^2}{4N_1(T)(T-N_1(T))}
\ge
\kappa:=\frac12\log\frac43>0.
\]
Since $\nu_\varepsilon$ has positive Pareto separation,
uniform fairness implies that, for every fixed
$\varepsilon\in(0,1)$ and $\zeta\in(0,1)$,
\[
\Prob_{\nu_\varepsilon}(E_T^c)
\le
\frac{\E_{\nu_\varepsilon}\F_T}{\kappa}
=
o(T^{-1+\zeta}).
\]

Let $\Prob_\nu^T$ denote the law of the actions and observed
rewards through time $T$ under instance $\nu$.
The policy uses the same conditional action probabilities
under both instances, and only arm 1's second reward
coordinate changes. The chain rule for KL divergence
therefore gives
\[
\KL(\Prob_{\nu_0}^T\|\Prob_{\nu_\varepsilon}^T)
=
\frac{\varepsilon^2}{2\sigma^2}
\E_{\nu_0}N_1(T)
=
\frac{\varepsilon^2x_T}{2\sigma^2}.
\]
By the Bretagnolle--Huber inequality
(Lemma 7 in \citet{duchi2024information}),
\[
\Prob_{\nu_0}(E_T)
+\Prob_{\nu_\varepsilon}(E_T^c)
\ge
\frac12
\exp\left\{-\frac{\varepsilon^2x_T}{2\sigma^2}\right\}.
\]
Combining the preceding bounds yields
\[
\frac{4x_T}{T}
+o(T^{-1+\zeta})
\ge
\frac12
\exp\left\{-\frac{\varepsilon^2x_T}{2\sigma^2}\right\}
\]
for every fixed $\varepsilon\in(0,1)$ and
$\zeta\in(0,1)$.

Suppose that $x_T/\log T$ did not tend to infinity.
Then there would exist a finite $M>0$ and a sequence
$T_k\to\infty$ such that $x_{T_k}\le M\log T_k$.
Choose a fixed $\varepsilon\in(0,1)$ sufficiently
small that
\[
a:=\frac{\varepsilon^2M}{2\sigma^2}<1,
\]
and then choose $\zeta\in(0,1-a)$.
Along this sequence, the preceding inequality implies
\[
\frac12
\le
\frac{4M\log T_k}{T_k^{1-a}}
+
o\left(T_k^{-(1-a-\zeta)}\right).
\]
Both terms on the right-hand side tend to zero,
a contradiction. Therefore,
\[
\frac{\E_{\nu_0}N_1(T)}{\log T}
\longrightarrow\infty.
\]

Finally, under $\nu_0$, only arm 2 is Pareto optimal,
so
\[
\F_T
=
-\log\left(1-\frac{N_1(T)}{T}\right)
\ge\frac{N_1(T)}{T}.
\]
Taking expectations and using the preceding limit gives
\[
\frac{T\E_{\nu_0}\F_T}{\log T}
\longrightarrow\infty,
\]
completing the proof.
\end{proof}

\subsection{Proof for Results in Section \ref{sec:safeguard}}

\begin{proof}
We first show that the evidence gate is released within
$O_\mu(\log T)$ rounds with high probability. We then
bound the imbalance among Pareto optimal arms after
release and use these bounds to control fairness.

\paragraph{Release of the quota.}
Let $\mathcal E_T$ be the event on which, for every
arm, objective, and sample count $2\le n\le T$,
the empirical mean differs from its true mean by
at most
\[
r(n)=\sqrt{\frac{2\sigma^2\Lambda_T}{n}}.
\]
Since projection onto $\mathcal M$ does not increase
the estimation error, sub-Gaussian concentration
and a union bound over the arm reward tapes give
\[
\Prob_\mu(\mathcal E_T^c)
\le 2KdT e^{-\Lambda_T}
=\frac{1}{2T^6}.
\]
The event remains valid under adaptive sampling.

For the analysis, let
\[
n_\mu=
\max\left\{
2,\left\lceil
\frac{512\sigma^2\Lambda_T}{\eta_\mu^2}
\right\rceil
\right\}.
\]
Whenever $n_{\min}(t)\ge n_\mu$, we have
$r_t^{\max}\le\eta_\mu/16$.
On $\mathcal E_T$, the empirical Pareto set
is then correct, and
\[
\widehat\eta_t
\ge\eta_\mu-2r_t^{\max}
\ge8r_t^{\max}.
\]
Thus, the evidence gate is activated once
every arm has received $n_\mu$ observations,
unless it has already been released.

By the assumed condition $\rho_T\ge n_\mu$,
the quota satisfies
\[
h_{t,T}\ge n_\mu,
\qquad
t\ge\max\{2K,\lceil\log T\rceil\}.
\]
Before release, whenever $n_{\min}(t)<n_\mu$,
the quota branch therefore selects a
least-sampled arm with probability $p_g$.
Each such selection decreases
\[
\sum_{i=1}^K[n_\mu-N_i(t)]_+
\]
by one. Since every arm is initially sampled
twice, at most $K(n_\mu-2)$ successful quota
selections are required.

Define
\[
s=
\max\{2K,\lceil\log T\rceil\}
+
\left\lceil
\frac{2}{p_g}
\bigl[K(n_\mu-2)+8\Lambda_T\bigr]
\right\rceil+1.
\]
Coupling the quota decisions with independent
Bernoulli variables of parameter $p_g$,
a Chernoff bound gives
\[
\Prob\left\{
\operatorname{Bin}\left(
\left\lceil
\frac{2}{p_g}
[K(n_\mu-2)+8\Lambda_T]
\right\rceil,p_g
\right)
<K(n_\mu-2)
\right\}
\le e^{-2\Lambda_T}.
\]

By the definition of $C_\mu$, we have
$s\le C_\mu\Lambda_T$.
Consequently, there exists an event
$\mathcal G_T$ such that the gate is
released by time $s$ on $\mathcal G_T$, and
\begin{equation}\label{eq:egatt-good}
\Prob_\mu(\mathcal G_T^c)
\le\frac1{2T^6}+e^{-2\Lambda_T}
\le T^{-6}.
\end{equation}

\paragraph{Arm selection after release.}
Let $\tau\le s$ be the release time on
$\mathcal G_T$.
By the gate condition,
$\widehat\eta_\tau\ge8r_\tau^{\max}$.
On $\mathcal E_T$, this certifies the true
Pareto set and gives
\[
\eta_\mu
\ge\widehat\eta_\tau-2r_\tau^{\max}
\ge6r_\tau^{\max}.
\]
Since $r_t^{\max}$ is nonincreasing,
the empirical Pareto set remains correct
for every $t\ge\tau$, with
\[
\widehat\eta_t
\ge\eta_\mu-2r_t^{\max}
\ge4r_t^{\max}.
\]

We now compare the gains in \eqref{eq:gain}.
For a dominated arm $i$, its empirical
domination margin satisfies
\[
\widehat\Delta_{t,i}
\ge\widehat\eta_t
\ge4r_t^{\max}.
\]
Moreover, any alternative that changes
the Pareto membership of arm $i$ must
perturb at least one mean vector involved
in an empirical domination relation
by at least $\widehat\eta_t/2$
in the infinity norm.
By the definition of $Z_{t,i}$,
\[
Z_{t,i}
\ge
\frac{n_{\min}(t)\widehat\eta_t^2}
{8\sigma^2}
\ge4\Lambda_T.
\]
Thus, recalling
$R_{t,i}=\sqrt{N_i(t)}\widehat\Delta_{t,i}/\sigma$,
\[
R_{t,i}\sqrt{1+Z_{t,i}}
\ge8\sqrt2\,\Lambda_T.
\]

Since $c_\tau=1$, the unnormalized gain
of every dominated arm satisfies
\[
\widetilde G_{t,i}
\le
e^{-8\sqrt2\Lambda_T}
\log\left(1+\frac1{N_i(t)}\right)
<\frac1{T+1}.
\]
For a Pareto optimal arm $j$, however,
$\widehat\Delta_{t,j}=0$, and hence
\[
\widetilde G_{t,j}
=
\log\left(1+\frac1{N_j(t)}\right)
\ge\frac1{T+1}.
\]
Since normalization does not change
the ordering of gains, all Pareto optimal
arms have larger gains than dominated arms.

When $m\ge2$, the two largest gains
therefore correspond to the two
least-sampled Pareto optimal arms.
The adaptive probability in
\eqref{eq:beta} assigns probability
at least $0.70$ to the arm with
the smaller count.
When $m=1$, the rule selects the
unique Pareto optimal arm with
probability one.

It follows that no dominated arm is
selected after release on $\mathcal G_T$.
In particular,
\[
B_T\le s
\qquad\text{on }\mathcal G_T.
\]
Since $B_T\le T$ otherwise,
\begin{equation}\label{eq:egatt-B-moment}
\E_\mu B_T^4
\le s^4+T^4\Prob_\mu(\mathcal G_T^c)
\le C_\mu^4\Lambda_T^4+T^{-2}.
\end{equation}

\paragraph{Moment bound for the imbalance.}
Suppose $m\ge2$.
Starting from the actual counts at the
deterministic time $s$, construct an
auxiliary process that always selects
the two least-sampled true Pareto optimal
arms, using the adaptive probability
in \eqref{eq:beta}.

Couple its randomization with EG-ATT.
On $\mathcal G_T$, the two processes
coincide after $s$. Importantly, the
auxiliary process is defined on every
trajectory, so its transition probabilities
can be analyzed without conditioning
on $\mathcal G_T$.

Let $x_i(t)$ denote the centered count
of Pareto optimal arm $i$ in this process,
and write
\[
D_t=
\left(\sum_{i\in\Astar}x_i(t)^2\right)^{1/2}.
\]
Order these counts as
$x_1\le x_2\le\cdots\le x_m$.
Since the smaller count is selected
with probability at least $p_\star=0.70$,
\[
\E[x_{A_{t+1}}(t)\mid\mathcal H_t]
\le
p_\star x_1+(1-p_\star)x_2,
\]
where $\mathcal H_t$ denotes the
auxiliary process history.

Using $\sum_i x_i=0$, we have
\[
x_2\le-\frac{x_1}{m-1},
\qquad
D_t\le-\sqrt{m(m-1)}\,x_1.
\]
Consequently,
\[
\begin{aligned}
\E[x_{A_{t+1}}(t)\mid\mathcal H_t]
&\le
\frac{mp_\star-1}{m-1}x_1\\
&\le-\frac{D_t}{2m},
\end{aligned}
\]
where the last inequality holds for
every $m\ge2$ when $p_\star=0.70$.

Each selection changes the centered
count vector by $e_i-\mathbf1/m$.
Therefore,
\[
D_{t+1}^2-D_t^2
=
2x_{A_{t+1}}(t)+1-\frac1m,
\]
which gives the drift bound
\begin{equation}\label{eq:egatt-drift}
\E[D_{t+1}^2-D_t^2\mid\mathcal H_t]
\le-\frac{D_t}{m}+1-\frac1m.
\end{equation}

We next establish a uniform fourth-moment
bound. For $D_t\ge2(m-1)$, concavity of
the square root and \eqref{eq:egatt-drift}
give
\[
\E[D_{t+1}-D_t\mid\mathcal H_t]
\le-\frac1{4m}.
\]
Since $|D_{t+1}-D_t|\le1$,
Hoeffding's lemma, with $\theta=1/(4m)$,
implies
\[
\E[e^{\theta D_{t+1}}\mid\mathcal H_t]
\le
e^{-1/(32m^2)}e^{\theta D_t}.
\]
When $D_t<2(m-1)$, the left-hand side
is bounded by $e^{\theta(2m-1)}$.
Combining these cases and iterating
from time $s$, we obtain
\[
\E e^{\theta D_T}
\le
\exp\left(
\frac{s}{4m}-\frac{T-s}{32m^2}
\right)
+
\frac{e^{(2m-1)/(4m)}}
{1-e^{-1/(32m^2)}}.
\]

The sufficient horizon condition yields
\[
T-s\ge8ms.
\]
Hence the first term is at most one,
and the exponential moment is bounded
by a constant depending only on $m$.
Using $x^4\le24e^{\theta x}/\theta^4$
for $x\ge0$, we conclude that
\begin{equation}\label{eq:egatt-aux-moment}
\E D_T^4\le M_m,
\end{equation}
where $M_m<\infty$ depends only on $m$.

On $\mathcal G_T$, the actual imbalance
satisfies $\Phi_T=D_T^2$.
On its complement, $\Phi_T^2\le T^4$.
Together with \eqref{eq:egatt-good}
and \eqref{eq:egatt-aux-moment},
this gives
\begin{equation}\label{eq:egatt-Phi-moment}
\E_\mu\Phi_T^2
\le M_m+T^{-2}.
\end{equation}

\paragraph{Moments of fairness.}
Recall the decomposition
\[
\F_T=
-\log\left(1-\frac{B_T}{T}\right)
+D_{\mathrm{KL}}(U_m\|q_T),
\]
where $q_T$ is the empirical allocation
conditioned on Pareto optimal pulls.

Let $H=T-s$ and
$a=-\log(1-s/T)$.
On $\mathcal G_T$, we have $B_T\le s$.
If the auxiliary imbalance additionally
satisfies $D_T<H/(2m)$, every
Pareto optimal arm has at least half
its balanced allocation.
The inequality between KL divergence
and chi-squared divergence gives
\[
D_{\mathrm{KL}}(U_m\|q_T)
\le\frac{2m\Phi_T}{(T-B_T)^2}
\le\frac{2mD_T^2}{H^2}.
\]
Therefore,
\[
\F_T\le a+\frac{2mD_T^2}{H^2}
\]
on this event.

By Markov's inequality and
\eqref{eq:egatt-aux-moment}, the
probability that this bound does not
apply is at most
\[
\delta_T=
T^{-6}+\frac{16m^4M_m}{H^4}.
\]
On the complementary event, we use
the general bound
$\F_T\le\log(T/m)$.

Taking expectations and using
$\E D_T^2\le\sqrt{M_m}$ gives
\[
\E_\mu\F_T
\le
a+\frac{2m\sqrt{M_m}}{H^2}
+\delta_T\log(T/m).
\]
Similarly, since
$\Var_\mu(\F_T)\le\E_\mu\F_T^2$,
\[
\Var_\mu(\F_T)
\le
2a^2+\frac{8m^2M_m}{H^4}
+\delta_T\log^2(T/m).
\]

Finally, the horizon condition implies
$s\le C_\mu\Lambda_T\le T/2$, so
\[
a\le\frac{2C_\mu\Lambda_T}{T},
\qquad
H\ge\frac T2,
\qquad
\delta_T
\le T^{-6}+\frac{256m^4M_m}{T^4}.
\]
Substitution yields
\[
\E_\mu\F_T
\le\frac{2C_\mu\Lambda_T}{T}
+\frac{D}{T^2},
\]
and
\[
\Var_\mu(\F_T)
\le\frac{8C_\mu^2\Lambda_T^2}{T^2}
+\frac{D\log^2T}{T^4},
\]
for a sufficiently large constant $D$
depending only on $m$.
Increasing $D$ if necessary also gives
the claimed bound on $\E_\mu\Phi_T^2$.

\paragraph{Case $m=1$.}
When the Pareto set contains a single
arm, $\Phi_T=0$.
On $\mathcal G_T$, the unique Pareto optimal
arm is selected after release, so
$\F_T\le a$.
Consequently,
\[
\E_\mu\F_T
\le a+T^{-6}\log T,
\qquad
\Var_\mu(\F_T)
\le a^2+T^{-6}\log^2T.
\]
These satisfy the stated bounds
with a suitable constant $D$.

Together with \eqref{eq:egatt-B-moment},
this completes the proof.
\end{proof}

\section{Experimental Details}
\label{app:fairness-instance-details}

\subsection{Reward Instances}
The three instances for each reward model are listed below.
Arm indices are consistent across separation margins.
The Pareto optimal arms are $\{1,2,3,4\}$ for Bernoulli rewards
and $\{1,2,3\}$ for Gaussian rewards. The Gaussian rewards
have standard deviation $\sigma=0.12$.

\begin{table}[h]
\centering
\small
\caption{Mean reward vectors for the bernoulli instances.}
\label{tab:fairness-instance-bern}
\begin{tabular}{lccc}
\toprule
Arm & $\eta_\mu=0.005$ & $\eta_\mu=0.01$ & $\eta_\mu=0.1$ \\
\midrule
1 & $(0.525,0.5)$ & $(0.55,0.5)$ & $(0.6,0.1)$ \\
2 & $(0.515,0.505)$ & $(0.53,0.51)$ & $(0.4,0.2)$ \\
3 & $(0.51,0.52)$ & $(0.52,0.54)$ & $(0.3,0.5)$ \\
4 & $(0.5,0.535)$ & $(0.5,0.57)$ & $(0.1,0.8)$ \\
5 & $(0.505,0.505)$ & $(0.51,0.51)$ & $(0.2,0.2)$ \\
6 & $(0.5,0.5)$ & $(0.5,0.5)$ & $(0.1,0.1)$ \\
\bottomrule
\end{tabular}
\end{table}

\begin{table}[t]
\centering
\small
\caption{Mean reward vectors for the gaussian instances.}
\label{tab:fairness-instance-gaussian}
\begin{tabular}{lccc}
\toprule
Arm & $\eta_\mu=0.005$ & $\eta_\mu=0.02$ & $\eta_\mu=0.1$ \\
\midrule
1 & $(0.5425,0.53)$ & $(0.67,0.62)$ & $(0.8,0.55)$ \\
2 & $(0.53,0.5425)$ & $(0.62,0.67)$ & $(0.55,0.8)$ \\
3 & $(0.535,0.535)$ & $(0.64,0.64)$ & $(0.65,0.65)$ \\
4 & $(0.5275,0.5275)$ & $(0.61,0.61)$ & $(0.5,0.5)$ \\
5 & $(0.525,0.535)$ & $(0.6,0.64)$ & $(0.45,0.65)$ \\
\bottomrule
\end{tabular}
\end{table}

\subsection{Baseline Implementations}
We include the five baselines compared in the main text:
Pareto-UCB1, Pareto Thompson sampling (PTS), Width-guided UCB,
Pareto-KG, and Annealing-Pareto.
Pareto-UCB1 uses the optimistic confidence radius without a Gaussian
noise multiplier. Bernoulli PTS uses independent $\operatorname{Beta}(1,1)$
priors, while Gaussian PTS uses normal mean posteriors.
Width-guided UCB uses the theorem-matched coefficient $c=2$.
Pareto-KG uses a posterior-uncertainty plug-in and is therefore
an adaptation of the original method. Annealing-Pareto uses decay
$0.4$ for Bernoulli rewards and the fixed benchmark choice $0.97$
for Gaussian rewards. Applying confidence rules originally developed
for bounded rewards to Gaussian observations is also an adaptation.

\subsection{Tuning Details.}
For each reward family, separation margin, and horizon,
we tuned ATT and EG-ATT separately under two Bayesian-optimization (BO)
objectives: the mean-plus-standard-deviation objective (M) and the
log-scale objective (L), as defined in the main text.
Each search evaluated 20 configurations using 28 pilot runs,
compared leading configurations using 90 validation runs,
and evaluated the selected configuration on 240 independent held-out runs.
Tables~\ref{tab:tuned-bo-bern} and~\ref{tab:tuned-bo-gauss}
report the selected parameters together with the held-out
mean and variance of terminal unfairness for every setting.
The parameter vectors are
$\boldsymbol\theta=(c_\tau,p_0,\delta,\lambda,p_-,p_+)$,
for the temperature, leader-probability intercept, count-deficit
strength, denominator scale, and clipping limits, and
$\boldsymbol\omega=(q_0,p_g,a_\rho,a_g)$,
for the quota coefficient, quota probability, reserve multiplier,
and gate multiplier; the latter are used only by EG-ATT.
Numerical parameter values are rounded for display.
In all 48 selected configurations, the observed gate-release fraction
was zero, so these finite-horizon EG-ATT results reflect quota protection
without gate release.

\begin{table}[h]
\centering
\scriptsize
\setlength{\tabcolsep}{2.5pt}
\caption{Selected BO parameters and held-out unfairness for Bernoulli rewards.
M denotes mean+SD tuning; L denotes log-scale tuning.}
\label{tab:tuned-bo-bern}
\resizebox{\linewidth}{!}{%
\begin{tabular}{rrllllrr}
\toprule
$\eta_\mu$ & $T$ & Method & BO & $\boldsymbol\theta$ & $\boldsymbol\omega$ & Mean & Variance \\
\midrule
0.005 & 5,000 & ATT & M & $(2.87,0.658,0.010,0.155,0.452,0.994)$ & -- & $0.3655$ & $0.004249$ \\
0.005 & 5,000 & ATT & L & $(2.88,0.597,0.200,0.669,0.317,0.994)$ & -- & $0.3651$ & $0.004257$ \\
0.005 & 5,000 & EG-ATT & M & $(2.90,0.589,0.267,0.523,0.325,0.994)$ & $(107.52,0.552,0.494,9.67)$ & $0.3719$ & $0.001447$ \\
0.005 & 5,000 & EG-ATT & L & $(1.88,0.581,0.028,0.189,0.414,0.912)$ & $(155.33,0.762,0.673,8.62)$ & $0.4048$ & $2.7\times10^{-6}$ \\
\addlinespace[2pt]
0.005 & 50,000 & ATT & M & $(1.57,0.578,0.280,0.026,0.482,0.920)$ & -- & $0.2495$ & $0.01683$ \\
0.005 & 50,000 & ATT & L & $(3.00,0.801,0.266,0.116,0.507,0.936)$ & -- & $0.2758$ & $0.004433$ \\
0.005 & 50,000 & EG-ATT & M & $(1.40,0.782,0.417,0.276,0.536,0.983)$ & $(8.37,0.949,0.750,9.82)$ & $0.2488$ & $0.01999$ \\
0.005 & 50,000 & EG-ATT & L & $(2.89,0.622,0.088,0.203,0.464,0.926)$ & $(159.27,0.633,0.347,9.22)$ & $0.273$ & $0.004707$ \\
\addlinespace[2pt]
0.01 & 5,000 & ATT & M & $(2.18,0.718,0.261,0.042,0.488,0.937)$ & -- & $0.3041$ & $0.009042$ \\
0.01 & 5,000 & ATT & L & $(2.95,0.558,0.169,0.044,0.478,0.986)$ & -- & $0.3099$ & $0.004766$ \\
0.01 & 5,000 & EG-ATT & M & $(2.03,0.869,0.107,0.538,0.390,0.980)$ & $(2.17,0.592,0.666,7.18)$ & $0.302$ & $0.01001$ \\
0.01 & 5,000 & EG-ATT & L & $(2.91,0.721,0.468,0.190,0.437,0.932)$ & $(148.17,0.918,0.551,7.05)$ & $0.4053$ & $7.5\times10^{-10}$ \\
\addlinespace[2pt]
0.01 & 50,000 & ATT & M & $(1.00,0.700,0.250,0.150,0.450,0.950)$ & -- & $0.1368$ & $0.02818$ \\
0.01 & 50,000 & ATT & L & $(2.36,0.567,0.084,0.068,0.504,0.910)$ & -- & $0.1578$ & $0.004353$ \\
0.01 & 50,000 & EG-ATT & M & $(1.77,0.669,0.130,0.148,0.484,0.960)$ & $(35.36,0.504,0.930,10.94)$ & $0.1415$ & $0.006578$ \\
0.01 & 50,000 & EG-ATT & L & $(2.32,0.769,0.029,0.561,0.525,0.973)$ & $(2.62,0.651,0.785,11.58)$ & $0.1566$ & $0.004438$ \\
\addlinespace[2pt]
0.1 & 5,000 & ATT & M & $(1.00,0.700,0.250,0.150,0.450,0.950)$ & -- & $0.0246$ & $0.002773$ \\
0.1 & 5,000 & ATT & L & $(1.00,0.700,0.250,0.150,0.450,0.950)$ & -- & $0.0246$ & $0.002773$ \\
0.1 & 5,000 & EG-ATT & M & $(0.739,0.693,0.244,0.428,0.538,0.966)$ & $(2.38,0.784,0.449,8.96)$ & $0.02273$ & $6.55\times10^{-5}$ \\
0.1 & 5,000 & EG-ATT & L & $(1.00,0.700,0.250,0.150,0.450,0.950)$ & $(96.00,0.700,1.00,8.00)$ & $0.265$ & $1.93\times10^{-14}$ \\
\addlinespace[2pt]
0.1 & 50,000 & ATT & M & $(0.517,0.869,0.049,0.656,0.349,0.970)$ & -- & $0.006803$ & $0.005705$ \\
0.1 & 50,000 & ATT & L & $(0.426,0.895,0.493,0.385,0.441,0.998)$ & -- & $0.01402$ & $0.01824$ \\
0.1 & 50,000 & EG-ATT & M & $(0.443,0.893,0.483,0.666,0.534,0.977)$ & $(3.62,0.567,0.118,6.99)$ & $0.002758$ & $7.72\times10^{-7}$ \\
0.1 & 50,000 & EG-ATT & L & $(0.322,0.893,0.416,0.524,0.313,0.996)$ & $(3.73,0.782,0.270,9.41)$ & $0.003614$ & $2.08\times10^{-7}$ \\
\bottomrule
\end{tabular}%
}
\end{table}
\begin{table}[t]
\centering
\scriptsize
\setlength{\tabcolsep}{2.5pt}
\caption{Selected BO parameters and held-out unfairness for Gaussian rewards.
M denotes mean+SD tuning; L denotes log-scale tuning.}
\label{tab:tuned-bo-gauss}
\resizebox{\linewidth}{!}{%
\begin{tabular}{rrllllrr}
\toprule
$\eta_\mu$ & $T$ & Method & BO & $\boldsymbol\theta$ & $\boldsymbol\omega$ & Mean & Variance \\
\midrule
0.005 & 5,000 & ATT & M & $(2.20,0.586,0.221,0.080,0.460,0.915)$ & -- & $0.3188$ & $0.009786$ \\
0.005 & 5,000 & ATT & L & $(2.92,0.560,0.446,0.580,0.307,0.993)$ & -- & $0.3468$ & $0.007814$ \\
0.005 & 5,000 & EG-ATT & M & $(0.758,0.813,0.068,0.351,0.450,0.921)$ & $(42.86,0.943,0.004,6.00)$ & $0.2826$ & $0.01957$ \\
0.005 & 5,000 & EG-ATT & L & $(2.05,0.662,0.389,0.172,0.386,0.922)$ & $(155.36,0.580,0.806,8.67)$ & $0.4923$ & $4.89\times10^{-5}$ \\
\addlinespace[2pt]
0.005 & 50,000 & ATT & M & $(1.00,0.700,0.250,0.150,0.450,0.950)$ & -- & $0.07131$ & $0.0168$ \\
0.005 & 50,000 & ATT & L & $(1.80,0.697,0.270,0.036,0.440,0.939)$ & -- & $0.09463$ & $0.001683$ \\
0.005 & 50,000 & EG-ATT & M & $(1.00,0.700,0.250,0.150,0.450,0.950)$ & $(96.00,0.700,1.00,8.00)$ & $0.06995$ & $0.004597$ \\
0.005 & 50,000 & EG-ATT & L & $(0.823,0.677,0.068,0.082,0.479,0.921)$ & $(65.22,0.881,0.665,9.33)$ & $0.06028$ & $0.005503$ \\
\addlinespace[2pt]
0.02 & 5,000 & ATT & M & $(1.00,0.700,0.250,0.150,0.450,0.950)$ & -- & $0.05165$ & $0.01231$ \\
0.02 & 5,000 & ATT & L & $(1.00,0.700,0.250,0.150,0.450,0.950)$ & -- & $0.05165$ & $0.01231$ \\
0.02 & 5,000 & EG-ATT & M & $(0.707,0.870,0.361,0.545,0.477,0.912)$ & $(4.14,0.791,0.635,11.16)$ & $0.03897$ & $0.001997$ \\
0.02 & 5,000 & EG-ATT & L & $(0.784,0.801,0.108,0.102,0.430,0.926)$ & $(78.86,0.535,0.996,9.69)$ & $0.2196$ & $6.07\times10^{-15}$ \\
\addlinespace[2pt]
0.02 & 50,000 & ATT & M & $(1.00,0.700,0.250,0.150,0.450,0.950)$ & -- & $0.006584$ & $7.45\times10^{-6}$ \\
0.02 & 50,000 & ATT & L & $(1.00,0.700,0.250,0.150,0.450,0.950)$ & -- & $0.006584$ & $7.45\times10^{-6}$ \\
0.02 & 50,000 & EG-ATT & M & $(0.416,0.896,0.043,0.556,0.497,0.950)$ & $(3.29,0.578,0.303,8.77)$ & $0.004619$ & $3.22\times10^{-6}$ \\
0.02 & 50,000 & EG-ATT & L & $(0.257,0.894,0.484,0.329,0.510,0.971)$ & $(12.30,0.935,0.481,11.44)$ & $0.00686$ & $1.34\times10^{-6}$ \\
\addlinespace[2pt]
0.1 & 5,000 & ATT & M & $(0.405,0.712,0.119,0.151,0.323,0.954)$ & -- & $0.001771$ & $1.94\times10^{-6}$ \\
0.1 & 5,000 & ATT & L & $(0.703,0.551,0.040,0.651,0.465,0.956)$ & -- & $0.002576$ & $1.12\times10^{-6}$ \\
0.1 & 5,000 & EG-ATT & M & $(0.157,0.796,0.269,0.239,0.362,0.988)$ & $(2.09,0.763,0.035,9.25)$ & $0.007226$ & $4.92\times10^{-16}$ \\
0.1 & 5,000 & EG-ATT & L & $(0.274,0.582,0.155,0.326,0.370,0.987)$ & $(2.32,0.698,0.026,11.96)$ & $0.00723$ & $2.19\times10^{-9}$ \\
\addlinespace[2pt]
0.1 & 50,000 & ATT & M & $(0.584,0.796,0.479,0.079,0.533,0.995)$ & -- & $0.000296$ & $1.47\times10^{-8}$ \\
0.1 & 50,000 & ATT & L & $(0.242,0.589,0.496,0.597,0.319,0.950)$ & -- & $0.0001842$ & $2.03\times10^{-7}$ \\
0.1 & 50,000 & EG-ATT & M & $(0.216,0.869,0.012,0.328,0.515,0.970)$ & $(2.13,0.895,0.105,10.87)$ & $0.001681$ & $2.42\times10^{-20}$ \\
0.1 & 50,000 & EG-ATT & L & $(0.190,0.659,0.231,0.410,0.489,0.923)$ & $(2.43,0.851,0.044,11.95)$ & $0.001281$ & $6.68\times10^{-12}$ \\
\bottomrule
\end{tabular}%
}
\end{table}

Lastly, we observe that the terminating condition is never met
in the reported EG-ATT runs. Thus, these experiments isolate
the effect of quota protection at the tested horizons. This may
suggest that the terminating condition is conservative or that
longer horizons are needed for it to be met. Further investigation
is beyond the scope of this paper but may be of independent interest.

\end{document}